\newtheorem{theorem}{Theorem}
\newtheorem{lemma}{Lemma}
\newtheorem{proposition}{Proposition}
\journal{XXXXX}
\begin{document}

\begin{frontmatter}

\title{Sparse Algorithm 
for Robust LSSVM in Primal Space}


\author[mymainaddress,mysecondaryaddress]{Li Chen}
\ead{lilichenhappy@163.com}
\author[mymainaddress]{Shuisheng Zhou\corref{mycorrespondingauthor}}
\cortext[mycorrespondingauthor]{Corresponding author}
\ead{sszhou@mail.xidian.edu.cn}

\address[mymainaddress]{School of Mathematics and Statistics, Xidian University, 266 Xinglong Section, Xifeng Road, Xi'an, China}
\address[mysecondaryaddress]{Department of Basic Science, College of Information and Business, Zhongyuan Technology University, 41 Zhongyuan Middle Road, Zhengzhou, China}
\begin{abstract}
As enjoying the closed form solution, least squares support vector machine (LSSVM) has been widely used for classification and regression problems having the comparable performance with other types of SVMs. However, LSSVM has two drawbacks: sensitive to outliers and lacking sparseness. Robust LSSVM (R-LSSVM) overcomes the first partly via nonconvex truncated loss function, but the current algorithms for R-LSSVM with the dense solution are faced with the second drawback and are inefficient for training large-scale problems. In this paper, we interpret the robustness of R-LSSVM from a re-weighted viewpoint and give a primal R-LSSVM by the representer theorem. The new model may have sparse solution if the corresponding kernel matrix has low rank. Then approximating the kernel matrix by a low-rank matrix and smoothing the loss function by entropy penalty function, we propose a convergent sparse R-LSSVM (SR-LSSVM) algorithm to achieve the sparse solution of primal R-LSSVM, which overcomes two drawbacks of LSSVM simultaneously. The proposed algorithm has lower complexity than the existing algorithms and is very efficient for training large-scale problems. Many experimental results illustrate that SR-LSSVM can achieve better or comparable performance with less training time than related algorithms, especially for training large scale problems.
\end{abstract}

\begin{keyword}
Primal LSSVM \sep Sparse solution \sep Re-weighted LSSVM \sep Low-rank approximation \sep Outliers
\MSC[2010] 00-01\sep  99-00
\end{keyword}

\end{frontmatter}


\section{Introduction} \label{intro}
Least squares support vector machine (LSSVM) was introduced by Suykens\cite{Suykens1999} and has been a powerful learning technique for classification and regression. It has been successfully used in many real world pattern recognition problems, such as disease diagnosis\cite{Duygu2011}, fault detection\cite{Long2014}, image classification \cite{Yang2015}, partial differential equations solving\cite{Mehrkanoon2015} and  visual tracking\cite{Gao2016}. LSSVM tries to minimize least squares errors on the training samples. Comparing with other SVMs, LSSVM is based on equality constraints rather than inequality ones, hence it has closed form solutions by solving a system of linear equations instead of solving a quadratic programming (QP) problem iteratively as other SVMs. So the training of LSSVM is simpler than other SVMs.

However, LSSVM has two main drawbacks. One is that it is sensitive to outliers, because outliers always have large support values (the values of Lagrange multiplier), which means that the influences of outliers are larger than other samples in constructing the decision function. Another is that the solution of LSSVM lacks sparse, which limits the method for training large scale problems.

In order to overcome the sensitivity to outliers of the LSSVM, Suykens et al.\cite{Suykens2002} proposed the weighted LSSVM (W-LSSVM) model by putting small weights on the less important samples or outliers to reduce their influence to the model. Some other weight setting strategies are proposed, see \cite{Valyon2003}\cite{You2011}. Theoretical analyses and the experimental results indicate that such methods are robust to outliers. But those methods need pre-solve the original LSSVM to set the weights, so they are all not suit for training large scale problems. Another technique to deal with robustness is on non-convex loss functions. Based on truncated least squares loss function, Wang et al.\cite{KuainiWang2014} and Yang et al.\cite{XiaoweiYang2014} presented robust LSSVM (R-LSSVM) model. Experimental results show that R-LSSVM model significantly reduces the effect of the outliers. However, the solutions to R-LSSVM by Yang's and Wang's algorithms both lack sparseness, and they need pre-compute the whole kernel matrix $K$ and the inverse of $(\lambda I+K)$, hence they are both time consuming for the large scale data sets. They are even unable to handle the data sets containing more than 10,000 training samples on common computers.

There are also some methods to promote the sparsity of LSSVM. Suykens et al.\cite{Suykens2000}\cite{J.A.K.Suykens2002} proposed a pruning algorithm which iteratively remove a small amount of samples (5\%) with smallest support values to impose sparseness. In this pruning algorithm, a retraining of LSSVM with the reduced training set is needed for each iteration, which leads to a large computation cost. Fixed-size least squares support vector machine (FS-LSSVM)\cite{Suykens2002} is another sparse algorithm. In this algorithm, some support vectors (SVs) referred to as prototype vectors are fixed in advance, and then they are replaced iteratively by samples which are randomly selected from the training set based on the quadratic R\'{e}nyi entropy criterion. However, in each iteration, this method only computes the entropy of the samples that are selected in the working set rather than the whole data set, which may cause the sub-optimized solutions. Jiao et al.\cite{Jiao2007} presented the fast sparse approximation for LSSVM (FSA-LSSVM), in which an approximated decision function was built iteratively by adding the basis function from a kernel-based dictionary one by one until the $\varepsilon$ criterion satisfied. This algorithm obtains sparse classifiers at a rather low cost. But with the very sparse setting, the experimental results in \cite{sszhou2016} show that FSA-LSSVM is not good on some training data sets. Zhou\cite{sszhou2016} proposed pivoting Cholesky of primal LSSVM (PCP-LSSVM) which is an iterative method based on incomplete pivoting Cholesky factorization of the kernel matrix. Theoretical analyses and the experimental results indicate that PCP-LSSVM can obtain acceptable test accuracy by extreme sparse solution.


In this paper, we aim to obtain the sparse solution of the R-LSSVM model to overcome the two drawbacks of LSSVM simultaneously. New algorithm solves the R-LSSVM in primal space as Zhou\cite{sszhou2016} did for LSSVM, and our main contributions can be summarized as follows:
\begin{itemize}%
\item By introducing an equivalent form of the truncated least squared loss function, we show that R-LSSVM is equivalent to a re-weighted LSSVM model, which explains the robustness of R-LSSVM.
\item We illustrate that representer theorem is also held for the non-convex loss function, and propose the primal R-LSSVM model which has a sparse solution if the kernel matrix is low rank.
\item We propose sparse R-LSSVM algorithm to obtain the sparse solution of R-LSSVM by applying low-rank approximation of the kernel matrix. The complexity of the new algorithm is lower than the existing non-sparse R-LSSVM algorithms.
\item A large number of experiments demonstrate that the proposed algorithm can process large-scale problems efficiently.
\end{itemize}

The rest of the paper is organized as follows. The brief descriptions of the R-LSSVM and its existing algorithms are given in section 2. In section 3, robustness of R-LSSVM is interpreted from a re-weighted viewpoint. In section 4, primal R-LSSVM and its smooth version are discussed, and the novel sparse algorithm is proposed. After that, the convergence and complexity of the new algorithm are analyzed. Section 5 includes some experiments to show the efficiency of the proposed algorithm. Section 6 concludes this paper.

\section{Robust LSSVM model and the existing algorithms}
\label{sec:1}
In this section, we briefly summarize the R-LSSVM and the existing algorithms.
\subsection{Robust LSSVM}

Consider a training set with $m$ pairs of samples $\left\{ {{\mathbf x_i},{y_i}} \right\}_{i = 1}^m$, where ${\mathbf x_i} \in {\Re^l}$ are the input data and ${y_i} \in \left\{ { - 1, + 1} \right\}$ or ${y_i} \in \Re$ are the output targets corresponding to the inputs for classification or regression problems. The classical LSSVM model is described as follows:
\begin{equation}\label{eq:LSSVM}
\min_{\mathbf w\in\Re^m,b\in\Re}~\frac{\lambda}{2}{\mathbf w^\top}\mathbf w+\frac{1}{m}\sum_{i=1}^m L_{sq}\left({y_i} - {\mathbf w^\top}\varphi \left({\mathbf x_i}\right)-b\right),
\end{equation}
where $\lambda>0$ is the regularization parameter, $\mathbf w$ is the normal of the hyperplane, $b$ is the bias, $\varphi(\mathbf x)$ is a map which maps the input $\mathbf x$ into a high-dimensional feature space, especially for managing the nonlinear learning problems, and $L_{sq}(\xi)=\frac{1}{2}\xi^2$ is the least squares loss with ${\xi}={y} - {\mathbf w^\top}\varphi \left({\mathbf x}\right)-b$ being the predict error.

By replacing $L_{sq}(\xi)$ in \eqref{eq:LSSVM} with the truncated least squares loss $L_\tau(\xi)$:
\begin{equation}\label{eq:L_tao}
L_\tau\left(\xi\right)=\frac{1}{2}\min\left(\tau^2,\xi^2\right)= \left\{\begin{array}{*{10}{c}}{\frac{1}{2}\xi^2},&\text{if}~ |\xi| \leq \tau,\\
\frac{1}{2}\tau^2, &\text{if}~ |\xi|> \tau,
\end{array}\right.
\end{equation}
Wang et al.\cite{KuainiWang2014} and Yang et al.\cite{XiaoweiYang2014} introduced the Robust LSSVM (R-LSSVM):
  \begin{equation}\label{eq:R-LSSVM}
  \min_{\mathbf w\in\Re^m,b\in\Re}~\frac{\lambda}{2}{\mathbf w^\top}\mathbf w+\frac{1}{m}\sum_{i=1}^m L_{\tau}\left({y_i} - {\mathbf w^\top}\varphi \left({\mathbf x_i}\right)-b\right),
\end{equation}
where $ \tau \geq 0 $ is the truncated parameter which controls the errors of the outliers. Fig. \ref{fig:1} plots the $L_\tau(\xi)$ in \eqref{eq:L_tao} with $\tau=1.2$, the least square loss $L_{sq}(\xi)$ and the difference between them $L_2(\xi)$. It is clear that the losses of the outliers (samples with larger errors) are bounded by $L_\tau\left(\xi\right)$, hence it reduce the effects of the outliers in R-LSSVM. We will investigate the robustness of the R-LSSVM from a re-weighted viewpoint in section \ref{sec:robust}.

\subsection{Existing algorithms for R-LSSVM}
The truncated least squares loss $L_\tau(\xi)$ is non-convex and non-smooth, which can be easily observed by Fig. \ref{fig:1}, but $L_\tau\left(\xi\right)$ can be expressed as the difference between two convex functions $L_{sq}\left(\xi\right)$ and $L_2\left(\xi\right)$ \cite{KuainiWang2014}, where
 \begin{equation}\label{eq:L1_L2}
 L_2\left(\xi\right)= \left\{\begin{array}{*{10}{l}}{0},&\text{if} ~|\xi|\leq \tau,\\
{\frac{1}{2}(\xi^2-\tau^2)}, &\text{if} ~|\xi|> \tau.
\end{array}\right.
\end{equation}
Then R-LSSVM can be transformed to a difference of convex (DC) programming:
\begin{equation}\label{eq:pR-LSSVM}
\min_{\mathbf w\in\Re^m,b\in\Re}~\frac{\lambda}{2}{\mathbf w^\top}\mathbf w+\frac{1}{m}\sum_{i=1}^m L_{sq}\left(y_i-{\mathbf w^\top}\varphi\left(\mathbf x_i\right)-b\right)-\frac{1}{m}\sum_{i=1}^m L_2{\left(y_i-{\mathbf w^\top}\varphi\left(\mathbf x_i\right)-b\right)}.
\end{equation}

\begin{figure}
\begin{minipage}{0.49\linewidth}
\includegraphics[width=1\textwidth]{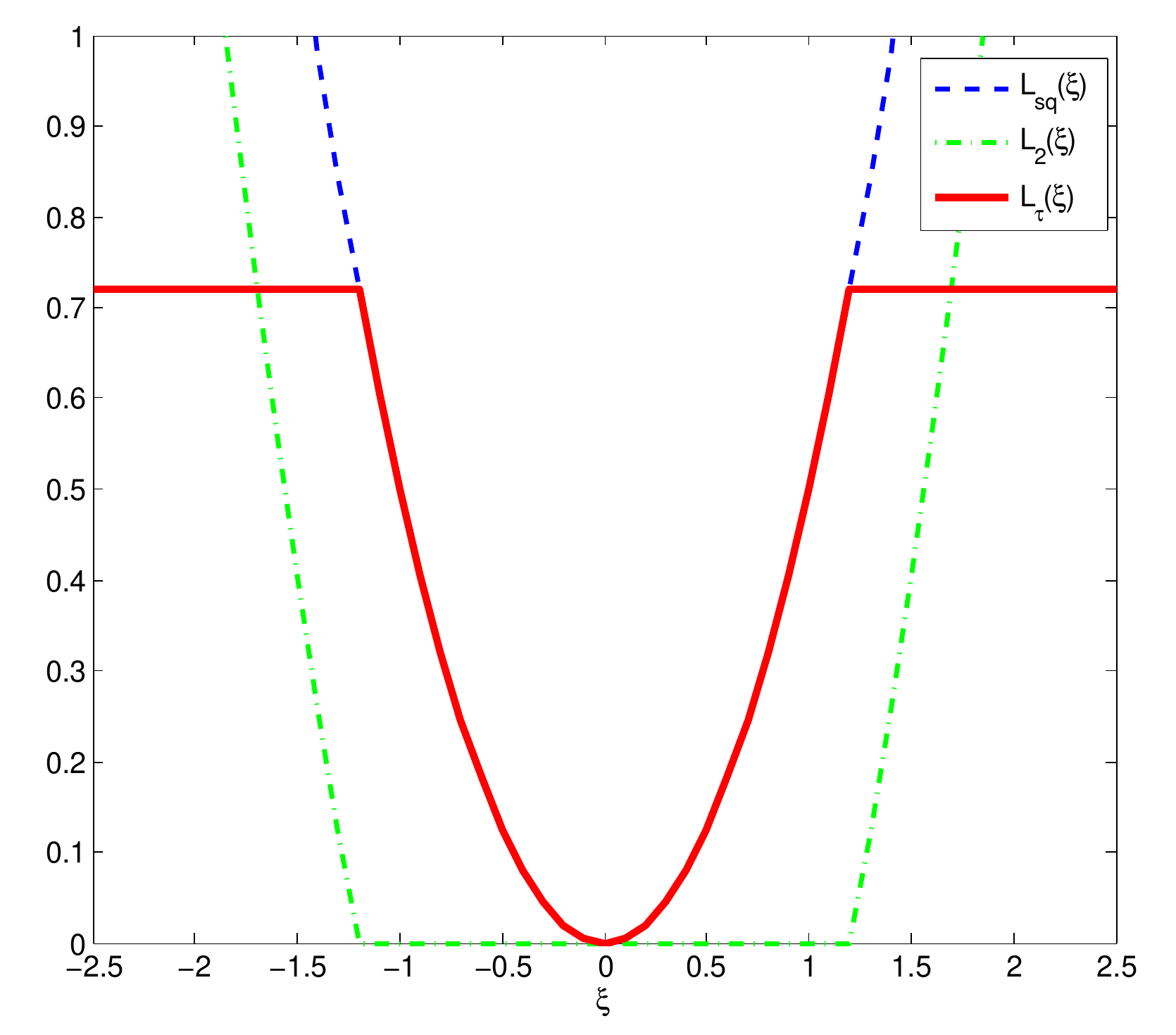}
   \caption{Plots of the least squares loss $L_{sq}(\xi)$ (dashed), the truncated least squares loss $L_\tau(\xi)$ (solid) and their difference $L_2(\xi)=L_{sq}(\xi)-L_\tau(\xi)$ (dotted-dashed), where $\tau=1.2$}\label{fig:1}
\end{minipage}
\hfill
\begin{minipage}{.49\linewidth}

\includegraphics[width=1\textwidth]{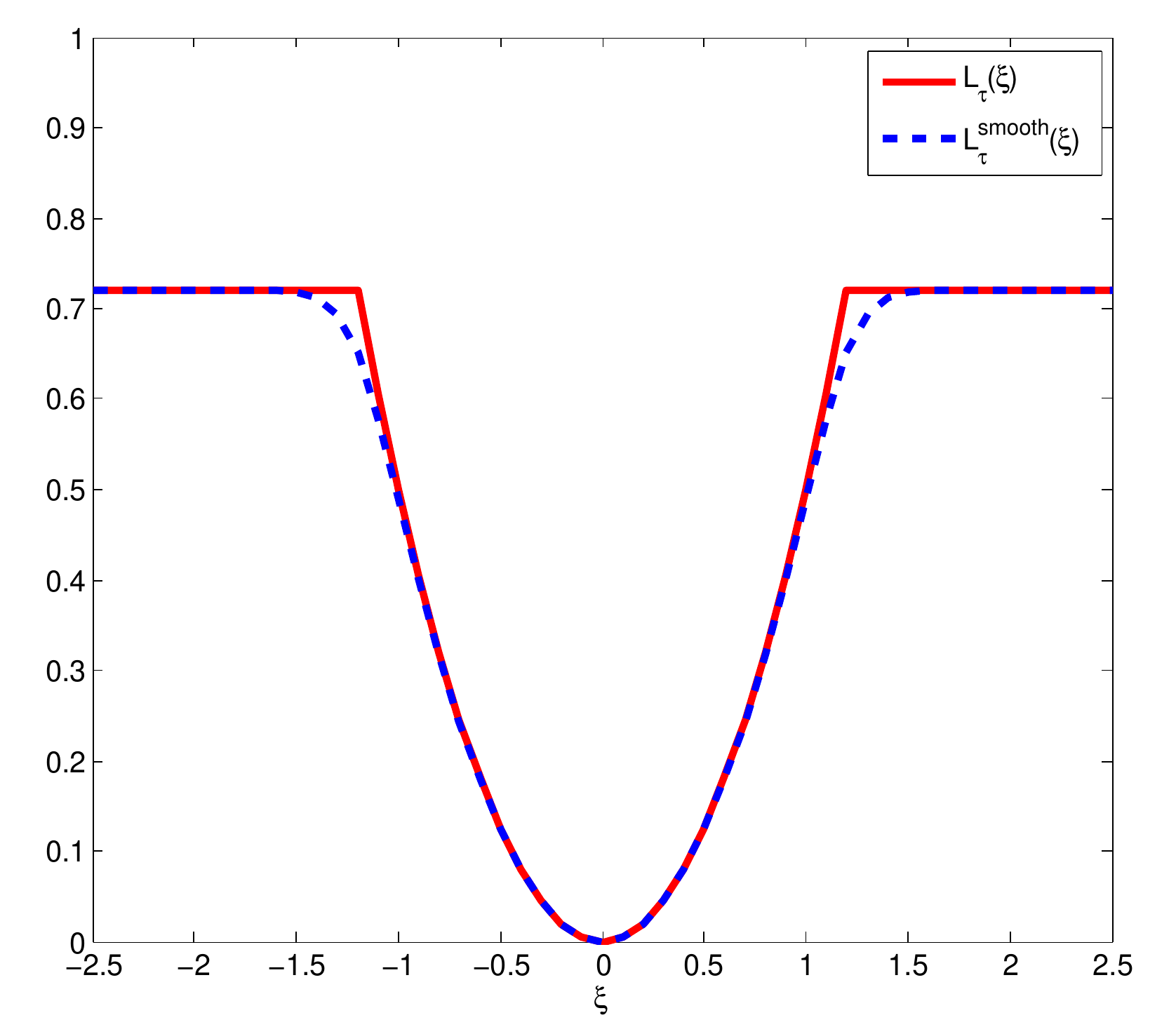}
  \caption{Plots of the truncated least squares loss $L_\tau(\xi)$ (solid) and the smoothed truncated least squares loss $L_\tau^{smooth}(\xi)$ (dashed) with $p=5$
  }\label{fig:2}
\end{minipage}
\end{figure}
%

 Wang et al.\cite{KuainiWang2014} and Yang et al.\cite{XiaoweiYang2014} solve the DC programming \eqref{eq:pR-LSSVM} by the Concave-Convex Procedure (CCCP). Then through different methods, they both focus on solving the following linear equations \eqref{eq:dr_lssvm} iteratively.
\begin{equation}\label{eq:dr_lssvm}
\left[\begin{array}{rr}
         I_m+\frac{1}{m\lambda} K&e \\
         e^\top & 0
       \end{array}
\right]\left[\begin{array}{c}\beta\\ b\end{array}\right]=\left[\begin{array}{c} \mathbf y - {\gamma^{(t)}}\\ 0\end{array}\right],
\end{equation}
where $K$ is the positive semi-definite kernel matrix satisfying $K_{ij}=k\left({\mathbf x_i},{\mathbf x_j}\right)={\varphi\left(\mathbf x_i\right)}^\top\varphi\left(\mathbf x_j\right)$, $\left(i,j\in M=\{1,\ldots,m\}\right)$, ${I_m} \in {\Re ^{m\times m}}$ is a identity matrix, $\mathbf y={\left({{y_1},\cdots,{y_m}}\right)^\top}$, $e = {( {1, \cdots ,1})}^\top\in {\Re ^{m} }$, 
and $\gamma^{(t)}={\left({{\gamma_1^{(t)}}, \cdots,{\gamma_m^{(t)}}}\right)^\top}$ is the value of $\gamma$ at the $t$-th iteration satisfying
\begin{equation}\label{eq:gamma}
\gamma_i^{(t)}\in\partial L_2(\xi_i^{(t)}), ~i=1,\cdots,m,
\end{equation}
where $\xi_i^{(t)}={y_i}-{K_{iM}}\beta^{(t)}-b^{(t)}$, $K_{iM}= \left[k\left({\mathbf x_i},{\mathbf x_1}\right),\cdots,k\left({\mathbf x_i},{\mathbf x_m}\right)\right]$ is the $i$-th row of the kernel matrix $K$.

Through iteratively solving \eqref{eq:dr_lssvm} with respect to $\beta$ and $b$ until convergence, the output deterministic function is $f(x)=\sum\limits_{i=1}^{m}\beta_i k(\mathbf x_i, \mathbf x)+b$.

In order to compute \eqref{eq:gamma}, Wang et al.\cite{KuainiWang2014} neglect the non-differentiability points in $L_2(\xi)$ and adopt the following formula:
 \begin{equation}
 \gamma_i^{(t)}=\left\{\begin{array}{*{10}{l}}{0},&\text{if} ~|\xi_i^{(t)}|\leq \tau,\\
{\xi_i^{(t)}}, &\text{if} ~|\xi_i^{(t)}|> \tau,
\end{array}\right.
\end{equation}
and Yang et al. compute \eqref{eq:gamma} after smoothing the function $L_2(\xi)$ by a piecewise quadratic function\cite{XiaoweiYang2014}.

One limitation of these two algorithms is that the solution lacks sparseness. That is because the coefficient matrix of \eqref{eq:dr_lssvm} is a nonsingular symmetric dense matrix and the vector on the right side of equations is dense. Hence the training speeds of these two algorithms are slow and they can not train large-scale problems efficiently.
\section{Robustness of R-LSSVM from a re-weighted viewpoint}
\label{sec:robust}
Wang et al.\cite{KuainiWang2014} illustrate the robustness of R-LSSVM only through experiments. Yang et al.\cite{XiaoweiYang2014} explain it from the relationship between the solutions of R-LSSVM and W-LSSVM\cite{J.A.K.Suykens2002}.
In this section, we will show that R-LSSVM enjoys the robustness from a re-weighted viewpoint\cite{Feng2016}.

By the representer theorem in section \ref{sec:4}, R-LSSVM can be translated into the following model in primal space without the implicit feature map $\varphi(\mathbf x)$:
\begin{equation}\label{eq:PR-LSSVM}
  \min_{\alpha\in\Re^m,b\in\Re}~\frac{\lambda}{2}{\alpha^\top} K  \alpha+\frac{1}{m}\sum_{i=1}^m L_{\tau}\left({y_i} - K_{iM}{\alpha}-b\right),
\end{equation}

In order to explain the robustness of the preceding model \eqref{eq:PR-LSSVM} more clearly, we propose an equivalent form of $L_{\tau}$ in Lemma \ref{lemma:1} from the idea in \cite{Geman1995}\cite{Nikolova2005}.

\begin{lemma}\label{lemma:1}
$L_\tau(\xi)=\frac{1}{2}\min\{\xi^2,\tau^2\}$ can be expressed as \begin{equation}\label{eq:eqv-trloss}
L_\tau(\xi)=\min_{\omega\in\Re_+}{\frac{1}{2}\omega \xi^2+\phi(\omega)}
\end{equation}
where
\begin{equation}\label{eq:phi}
  \phi(\omega)={\frac{\tau^2}{2}(1-\omega)_+}.
\end{equation}
\end{lemma}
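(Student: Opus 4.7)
The plan is to verify the identity by explicitly carrying out the one-dimensional minimization on the right-hand side of \eqref{eq:eqv-trloss}, exploiting the fact that the objective is piecewise linear in $\omega$. Since $\phi(\omega)=\frac{\tau^2}{2}(1-\omega)_+$ changes formula at $\omega=1$, I would split the feasible set $\Re_+$ into the two intervals $[0,1]$ and $[1,\infty)$, minimize separately on each, and then take the smaller of the two resulting values.

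On the first interval $\omega\in[0,1]$, the objective reduces to
\begin{equation*}
g_1(\omega)=\tfrac{1}{2}\omega\xi^2+\tfrac{\tau^2}{2}(1-\omega)=\tfrac{1}{2}\omega(\xi^2-\tau^2)+\tfrac{\tau^2}{2},
\end{equation*}
which is linear in $\omega$ with slope $\tfrac{1}{2}(\xi^2-\tau^2)$. Hence its minimum is attained at $\omega=1$ with value $\tfrac{1}{2}\xi^2$ when $\xi^2\le\tau^2$, and at $\omega=0$ with value $\tfrac{\tau^2}{2}$ when $\xi^2>\tau^2$. On the second interval $\omega\ge 1$, the objective reduces to $g_2(\omega)=\tfrac{1}{2}\omega\xi^2$, which is nondecreasing in $\omega$, so it is minimized at $\omega=1$ with value $\tfrac{1}{2}\xi^2$.

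Combining the two cases and picking the smaller candidate gives minimum value $\tfrac{1}{2}\xi^2$ when $|\xi|\le\tau$ (optimal $\omega^*=1$) and $\tfrac{\tau^2}{2}$ when $|\xi|>\tau$ (optimal $\omega^*=0$). This coincides exactly with the definition of $L_\tau(\xi)=\tfrac{1}{2}\min\{\xi^2,\tau^2\}$ in \eqref{eq:L_tao}, which completes the proof. There is no serious obstacle here — the only minor care needed is to check continuity of the two pieces at the junction $\omega=1$ (both pieces agree there, giving $\tfrac{1}{2}\xi^2$), so the minimum over $\Re_+$ is genuinely attained and the two case analyses are consistent.
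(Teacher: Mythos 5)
Your proof is correct and follows essentially the same route as the paper: both split the objective at $\omega=1$ into the two linear pieces $\tfrac{1}{2}(\xi^2-\tau^2)\omega+\tfrac{\tau^2}{2}$ on $[0,1]$ and $\tfrac{1}{2}\omega\xi^2$ on $[1,\infty)$, and read off the minimum according to the sign of $\xi^2-\tau^2$, recovering the same minimizers $\omega^*=1$ for $|\xi|\le\tau$ and $\omega^*=0$ for $|\xi|>\tau$. No substantive difference from the paper's argument.
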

\begin{proof}
\begin{eqnarray*}
\min\limits_{\omega\in\Re_{+}}{\frac{1}{2}\omega \xi^2+\phi(\omega)}
&=&\min\limits_{\omega\in\Re_{+}}\begin{cases}\frac{1}{2}(\xi^2-\tau^2)\omega+\frac{1}{2}\tau^2,~&\text{if} ~0\leq\omega\leq 1,\\
\frac{1}{2}\omega \xi^2,~&\text{if} ~\omega> 1,
\end{cases}\\
&=&\begin{cases}{\frac{1}{2}\xi^2},&\text{if}~ |\xi| \leq \tau,\\
\frac{1}{2}\tau^2, &\text{if}~ |\xi|> \tau,
\end{cases}\\
&=&L_\tau\left(\xi\right).
\end{eqnarray*}
Moreover,
\begin{equation}\label{eq:weight}
  \omega^\ast:=\mathop{\arg\min}_{\omega\in\Re_{+}}\left\{\frac{1}{2}\omega \xi^2+\phi(\omega)\right\}=\begin{cases}1,&\text{if}~ |\xi| \leq \tau,\\
0, &\text{if}~ |\xi|> \tau.
\end{cases}
\end{equation}
\end{proof}
By Lemma \ref{lemma:1} and the research of re-weighted LSSVM in \cite{Brabanter2009}, we have
\begin{proposition}
Any stationary point of R-LSSVM \eqref{eq:PR-LSSVM} can be obtained by solving an iteratively re-weighted LSSVM as follows:
\begin{equation}\label{eq:rewighted_lssvm}
  \min_{\alpha\in\Re^m,b\in\Re}~\frac{\lambda}{2}{\alpha^\top K \alpha}+\frac{1}{2m}\sum_{i=1}^m \omega_i^{(t)}\left({y_i} - K_{iM}\alpha-b\right)^2,
\end{equation}
where $\omega_i^{(t)}$ is the value of $t$-th iteration of the weight $\omega_i$.
\end{proposition}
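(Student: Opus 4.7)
The plan is to use Lemma \ref{lemma:1} to lift the R-LSSVM problem \eqref{eq:PR-LSSVM} to an equivalent joint optimization over both the model parameters $(\alpha,b)$ and a vector of auxiliary weights $\omega=(\omega_1,\ldots,\omega_m)^\top\in\Re_+^m$, and then to read off the re-weighted LSSVM as one block of a block-coordinate minimization of this joint objective. Concretely, applying Lemma \ref{lemma:1} pointwise with $\xi_i = y_i - K_{iM}\alpha - b$ rewrites each loss term as $L_\tau(\xi_i)=\min_{\omega_i\ge 0}\{\tfrac{1}{2}\omega_i\xi_i^2+\phi(\omega_i)\}$, so problem \eqref{eq:PR-LSSVM} is equivalent to
\begin{equation*}
\min_{\alpha,b,\,\omega\ge 0}\ J(\alpha,b,\omega):=\frac{\lambda}{2}\alpha^\top K\alpha+\frac{1}{2m}\sum_{i=1}^m\omega_i\bigl(y_i-K_{iM}\alpha-b\bigr)^2+\frac{1}{m}\sum_{i=1}^m\phi(\omega_i).
\end{equation*}
The minimum over $\omega$ with $(\alpha,b)$ fixed is attained at the weight formula \eqref{eq:weight}, and the minimum over $(\alpha,b)$ with $\omega=\omega^{(t)}$ fixed is exactly the re-weighted LSSVM \eqref{eq:rewighted_lssvm} (the $\phi$-term drops out because it does not depend on $(\alpha,b)$).

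I would then carry out the following steps in order. First, establish the equivalence between \eqref{eq:PR-LSSVM} and the joint problem for $J(\alpha,b,\omega)$ by invoking Lemma \ref{lemma:1} and interchanging the two minimizations (this interchange is valid because the inner minimum in $\omega$ is attained for every fixed $(\alpha,b)$). Second, define the alternating scheme: given $(\alpha^{(t)},b^{(t)})$, compute $\omega^{(t)}$ via \eqref{eq:weight} with $\xi_i^{(t)}=y_i-K_{iM}\alpha^{(t)}-b^{(t)}$, and then update $(\alpha^{(t+1)},b^{(t+1)})$ as the minimizer of \eqref{eq:rewighted_lssvm}. Third, for the stationarity claim, take any stationary point $(\alpha^\ast,b^\ast)$ of R-LSSVM \eqref{eq:PR-LSSVM}, define $\omega^\ast$ by \eqref{eq:weight} applied to the residuals at $(\alpha^\ast,b^\ast)$, and verify two things: (i) $(\alpha^\ast,b^\ast,\omega^\ast)$ is a stationary point of $J$ (immediate from Lemma \ref{lemma:1} since $\omega^\ast$ is optimal for the inner minimum); and (ii) the first-order condition of $J$ in $(\alpha,b)$ at $(\alpha^\ast,b^\ast,\omega^\ast)$ coincides with the normal equations of \eqref{eq:rewighted_lssvm}, so $(\alpha^\ast,b^\ast)$ is a fixed point of the re-weighted iteration with weights $\omega^\ast$.

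The main obstacle, and the only place where care is really needed, is the non-smoothness and non-convexity of $L_\tau$ at the break points $|\xi_i|=\tau$. A rigorous notion of stationarity is required: I would use the Clarke subdifferential of the (Lipschitz) loss $L_\tau$ together with the chain rule, and note that Lemma \ref{lemma:1} provides a smooth (in $(\alpha,b)$ for fixed $\omega$) upper envelope whose subdifferential with respect to $\omega$ contains $0$ at the choice \eqref{eq:weight}. On the kink set, both $\omega_i^\ast=0$ and $\omega_i^\ast=1$ are valid minimizers of the inner problem, which corresponds exactly to the multi-valued Clarke subgradient of $L_\tau$ at that point; this matching of multivalued objects is what makes the equivalence between stationarity of R-LSSVM and fixed points of the re-weighted LSSVM iteration work. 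With this identification in hand, the proposition follows directly.
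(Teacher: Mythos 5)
Your proposal is correct and follows essentially the same route as the paper: both lift \eqref{eq:PR-LSSVM} to the joint objective $J(\alpha,b,\omega)$ via Lemma \ref{lemma:1} and then read off \eqref{eq:rewighted_lssvm} as the $(\alpha,b)$-block of an alternating (ADM-style) minimization, with the $\omega$-block solved in closed form by \eqref{eq:weight}. Your additional discussion of Clarke subdifferentials at the kinks $|\xi_i|=\tau$ is a welcome extra precision that the paper's proof leaves implicit, but it does not change the underlying argument.
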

\begin{proof}
Substituting \eqref{eq:eqv-trloss} into \eqref{eq:PR-LSSVM}, we have
\begin{equation}\label{eq:inproof}
\min_{\omega\in{\Re_+^m},\alpha\in{\Re^m},b\in\Re}~J(\alpha,b,\omega):=\frac{\lambda}{2}\alpha^\top K \alpha+\frac{1}{m}\sum_{i=1}^{m}{\frac{1}{2}\omega_i \xi_i^2+\frac{1}{m}\sum_{i=1}^{m}\phi(\omega_i)},
\end{equation}
where $\xi_i=y_i-K_{iM}\alpha-b$. Since $J(\alpha,b,\omega)$ is nonconvex, only a stationary point of preceding minimization problem can be expected.
Let $(\alpha^\ast, b^\ast)$ be one of the stationary points of \eqref{eq:PR-LSSVM}. By the analysis above, there exists $\omega^*\in\mathop{\arg\min}_{\omega\in{\Re_+^m}}~J(\alpha^*,b^*,\omega)$ such that $(\alpha^\ast, b^\ast, \omega^*)$ be the solution of \eqref{eq:inproof}. On the other hand, if $(\alpha^\ast, b^\ast, \omega^*)$ is any stationary point of \eqref{eq:inproof}, then $(\alpha^\ast, b^\ast)=\mathop{\arg\min}_{\alpha\in{\Re^m}, b\in\Re}~J(\alpha,b,\omega^*)$ also solves \eqref{eq:PR-LSSVM}. Hence, we can iteratively solve \eqref{eq:inproof} by alternating direction method (ADM)\cite{He2012} 
as follows:
\begin{eqnarray}
(\alpha^{(t)},b^{(t)})&=&\mathop{\arg\min}_{\alpha\in{\Re^m},b\in\Re}~J(\alpha,b,\omega^{(t-1)})\label{eq:1}\\
\omega^{(t)}&\in&\mathop{\arg\min}_{\omega\in{\Re_+^m}}~J(\alpha^{(t)},b^{(t)},\omega)\label{eq:2}
\end{eqnarray}
Obviously, the optimization problem in \eqref{eq:2} has the closed form solution as \eqref{eq:weight}. The optimization problem in \eqref{eq:1} is just the re-weighted LSSVM \eqref{eq:rewighted_lssvm}.
\end{proof}

Since $\xi_i$ denotes the predicted error, similar to the robustness analysis in article \cite{Feng2016}, the larger $|\xi_i|$ is, the more likely that the instance pair $(\mathbf x_i,y_i)$ tends to be an outlier. From \eqref{eq:weight} and \eqref{eq:rewighted_lssvm}, it observes that when the $|\xi_i|$ is sufficiently large for the outlier instance $(\mathbf x_i,y_i)$, the corresponding weight $\omega_i$ in \eqref{eq:rewighted_lssvm} will be 0. That is, the truncated least squares loss function $L_\tau$ can reduce the influence of samples which are far away from their true targets. This explains the robustness of R-LSSVM from the re-weighted viewpoint.
\section{Sparse R-LSSVM algorithm}
In this section, we give the primal R-LSSVM and propose the sparse algorithm to obtain the sparse solution of the R-LSSVM.


\subsection{Primal R-LSSVM}
\label{sec:4}
If loss function is convex such as in LSSVM model \eqref{eq:LSSVM}, by duality theory, the optimal solution $\mathbf w$ can be represented as
\begin{equation}\label{eq:w}
  \mathbf w=\sum_{i=1}^ m{\alpha_i}\varphi\left(\mathbf x_i\right),
\end{equation}
where $\alpha_i\in\Re$. If loss function is nonconvex, the strong duality does not hold, hence we cannot get \eqref{eq:w} by duality. However, by the representer theorem in \cite{Scholkopf2001}\cite{Shai2014}, it is easily to prove that \eqref{eq:w} also holds.
\begin{theorem}
 Assume that $\varphi$ is a mapping from $\Re^l$ to a Hilbert space. Then there exists a vector $\alpha\in\Re^m$ such that \eqref{eq:w}
is an optimal solution of \eqref{eq:R-LSSVM} and \eqref{eq:pR-LSSVM}.
\end{theorem}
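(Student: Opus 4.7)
The plan is to follow the classical representer theorem argument of Schölkopf--Herbrich--Smola, noting that the argument is purely geometric and does not actually invoke convexity of the loss. First I would let $(\mathbf w^\ast, b^\ast)$ be any optimizer of \eqref{eq:R-LSSVM} and decompose $\mathbf w^\ast = \mathbf w_\parallel + \mathbf w_\perp$, where $\mathbf w_\parallel$ lies in the finite-dimensional subspace $V=\mathrm{span}\{\varphi(\mathbf x_1),\ldots,\varphi(\mathbf x_m)\}$ and $\mathbf w_\perp$ lies in its orthogonal complement $V^\perp$ inside the Hilbert space. This decomposition exists because $V$ is finite-dimensional, hence closed, so the Hilbert projection theorem applies.

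Next I would exploit two observations. For every training index $i$, $\mathbf w_\perp^\top \varphi(\mathbf x_i)=0$ by definition of $V^\perp$, and therefore the residual $y_i - \mathbf w^{\ast\top}\varphi(\mathbf x_i) - b^\ast$ equals $y_i - \mathbf w_\parallel^\top \varphi(\mathbf x_i) - b^\ast$. Consequently every loss term $L_\tau(\cdot)$ in \eqref{eq:R-LSSVM}, and for \eqref{eq:pR-LSSVM} also $L_{sq}(\cdot)$ and $L_2(\cdot)$, takes identical values at $\mathbf w^\ast$ and at $\mathbf w_\parallel$. On the regularizer side, the Pythagorean identity yields $\|\mathbf w^\ast\|^2 = \|\mathbf w_\parallel\|^2 + \|\mathbf w_\perp\|^2 \ge \|\mathbf w_\parallel\|^2$, with equality iff $\mathbf w_\perp=0$. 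Replacing $\mathbf w^\ast$ by $\mathbf w_\parallel$ therefore cannot increase the objective of \eqref{eq:R-LSSVM}; by optimality of $\mathbf w^\ast$ it does not decrease it either, so $(\mathbf w_\parallel, b^\ast)$ is itself optimal. Writing $\mathbf w_\parallel = \sum_{i=1}^m \alpha_i \varphi(\mathbf x_i)$ for suitable $\alpha\in\Re^m$ gives exactly the representation \eqref{eq:w}.

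For \eqref{eq:pR-LSSVM} the argument transfers verbatim: its objective coincides identically with that of \eqref{eq:R-LSSVM} under the DC decomposition $L_\tau = L_{sq} - L_2$, and, as noted above, both summands depend on $\mathbf w$ only through the inner products $\mathbf w^\top \varphi(\mathbf x_i)$, which are preserved by orthogonal projection onto $V$. I do not anticipate a genuine obstacle, since the representer theorem is a statement about minimisers rather than about dual variables and therefore survives the loss of convexity; the only mild point to be careful about is that the regularizer $\tfrac{\lambda}{2}\|\mathbf w\|^2$ is a strictly increasing function of $\|\mathbf w\|$, which is precisely what forces $\mathbf w_\perp=0$ at the optimum and rules out spurious optimizers outside $V$.
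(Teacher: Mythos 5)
Your argument is correct and is essentially the proof the paper has in mind: the paper states this theorem without writing out a proof, merely invoking the representer theorem of Sch\"{o}lkopf et al., and your orthogonal-decomposition and Pythagorean argument is precisely the standard proof of that cited result, which indeed never uses convexity of the loss. The only implicit assumption you share with the paper is that a minimizer $(\mathbf w^\ast,b^\ast)$ exists at all; granting that, nothing is missing.
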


Substituting \eqref{eq:w} into \eqref{eq:pR-LSSVM}, we get a DC programming with regard to $\alpha$ and $b$ as follows:
\begin{equation}\label{eq:vex_cave}
{\min \limits_{\alpha\in\Re^m,b\in\Re}~H{\left(\alpha,b\right)}={H_{1}\left(\alpha,b\right)}-{H_{2}\left(\alpha,b\right)}}
\end{equation}
with convex functions $H_{1}\left(\alpha,b\right)={\frac{\lambda}{2}}{\alpha^\top}K{\alpha}+\frac{1}{m}\sum\limits_{i=1}^mL_{sq}\left({y_i}-{K_{iM}}\alpha-b\right)$
and
$H_{2}\left(\alpha,b\right)= \frac{1}{m}\sum\limits_{i=1}^m {{L_2} \left({y_i}-{K_{iM}}\alpha-b\right)}$. We call the model \eqref{eq:vex_cave} or its equivalent form \eqref{eq:PR-LSSVM} as primal R-LSSVM for convenience.

Using CCCP method in \cite{KuainiWang2014}\cite{XiaoweiYang2014}\cite{Yuille2003}, the solution to the problem \eqref{eq:vex_cave} can be obtained by iteratively solving the following convex QP until it converges:
\begin{equation}\label{eq:alphab_t+1}
\begin{split}
{\left(\alpha ^{\left(t + 1\right)}, b^{\left(t+1\right)}\right)}
&=\mathop{\arg\min}_{\alpha\in\Re^m,b\in\Re}\left\{H_{1}\left(\alpha,b\right)-\langle [\alpha^\top,b]^\top,\partial H_2(\alpha^{\left(t\right)},b^{\left(t\right)})\rangle\right\}\\
&=\mathop{\arg\min}_{\alpha\in\Re^m,b\in\Re}\left\{H_{1}\left(\alpha,b\right)+\frac{1}{m}\sum\limits_{i=1}^m{\gamma_i^{(t)}}\left(K_{iM}\alpha+b\right)\right\},
\end{split}
\end{equation}
where $\gamma_i^{(t)}$ is the same as \eqref{eq:gamma} with $\xi_i^{(t)}=y_i-K_{iM}\alpha^{(t)}-b^{(t)}$.

However, the computation of $\gamma_i^{(t)}$ is not simple, since $L_2\left(\xi\right)$ is non-differentiable at some points. Inspired by the idea in \cite{ShuishengZhou2013}, we smooth $L_2\left(\xi\right)$ by the entropy penalty function. Let
\begin{equation}\label{eq:L_2*}
\bar L_2\left(\xi\right)=\frac{1}{2}\max\left\{0,\xi^2-\tau^2\right\}+\frac{1}{2p}\log\left(1+\exp\left(-p|\xi^2-\tau^2|\right)\right),
\end{equation}
then we have $\bar L_2\left(\xi\right)\rightarrow L_2{\left(\xi\right)}$ whenever $p\rightarrow+\infty$. $\bar L_2\left(\xi\right)$ is the smooth approximation of $L_2{\left(\xi\right)}$, and the upper bound of the difference between $\bar L_2\left(\xi\right)$ and $L_2{\left(\xi\right)}$ is $\frac{\log2}p$. In practice, if we set $p$ sufficiently large such as $p=10^4$, the difference between them can be neglected. Fig. \ref{fig:2} shows the comparison between $L_\tau(\xi)$ and the smoothed truncated least squares loss function $L_\tau^{smooth}(\xi)=L_{sq}(\xi)-\bar L_2\left(\xi\right)$ with $p=5$.

 The derivative of $\bar L_2\left(\xi\right)$ is:
\begin{equation}\label{eq:L_2*_grad}
\bar L_2'\left(\xi\right)=\xi\cdot\frac{\min\left\{1,\exp\left[{p\left(\xi^2-\tau^2\right)}\right]\right\}}{1+\exp{\left(-p\left|\xi^2-\tau^2\right|\right)}}.
\end{equation}
Replacing $L_2\left(\xi_i\right)$ with $\bar L_2\left(\xi_i\right)$ in \eqref{eq:gamma},
 the $\gamma_i^{\left(t\right)}$ in \eqref{eq:alphab_t+1} is calculated as follows:
\begin{equation}\label{eq:gamma_2}
\gamma_i^{(t)} =\frac{\xi_i^{(t)}\min\left\{1,\exp\left[p\left({\xi _i^{(t)}}^2-\tau^2\right)\right]\right\}}{1+\exp{\left(-p\left|{\xi_i^{(t)}}^2-\tau^2\right|\right)}}
\end{equation}

Yang et al.\cite{XiaoweiYang2014} also adopt a smooth procedure, but their method has to tune the smoothing parameter to get the best effect. That makes the parameter adjustment procedure complex. In comparison, our smoothing strategy based on entropy penalty function does not need to tune such parameter. What we need to do is set a large value for $p$ in \eqref{eq:gamma_2}.
\subsection{Sparse solution for Primal R-LSSVM}
After obtaining $\gamma_i^{(t)}$ by \eqref{eq:gamma_2}, $(\alpha^{(t+1)}, b^{(t+1)})$ in \eqref{eq:alphab_t+1} are the solutions of the following system of linear equations:
\begin{equation}\label{eq:pr_lssvm}
\left[\begin{array}{rc}
         m\lambda K+KK^\top & Ke \\
         e^\top K^\top & m
       \end{array}
\right]\left[\begin{array}{c}\alpha\\ b\end{array}\right]=\left[\begin{array}{c} K\\ e^\top\end{array}\right]\left(\mathbf y - {\gamma^{(t)}}\right).
\end{equation}

It seems that \eqref{eq:pr_lssvm} is more complicated than \eqref{eq:dr_lssvm} in a first sight. However, the coefficient matrix of \eqref{eq:dr_lssvm} is nonsingular symmetric dense matrix, which leads to a  non-sparse solution of \eqref{eq:dr_lssvm}. In comparison, the coefficients matrix of \eqref{eq:pr_lssvm} may be low rank if the related kernel matrix $K$ is low rank or is approximated by a low rank matrix. In this situation, \eqref{eq:pr_lssvm} may have sparse solution, which overcomes the limitation of the previous methods partly.

Now, we discuss the sparse optimization solution of \eqref{eq:pr_lssvm} as soon as the kernel matrix can be approximated by a low rank matrix.

After simply calculation, we get the bias $b=\frac{1}{m}\left[e^\top(\mathbf y - {\gamma^{(t)}})-e^\top K \alpha\right]$ by \eqref{eq:pr_lssvm}. Eliminating $b$, \eqref{eq:pr_lssvm} is simplified to the following linear equation:
\begin{equation}\label{eq:alpha_dense}
(m\lambda K+KK-\tfrac{1}{m}Kee^\top K)\alpha=K\left[(\mathbf y - {\gamma^{(t)}})-\tfrac{e^\top(\mathbf y - {\gamma^{(t)}})}{m}e\right].
\end{equation}

Nystr\"{o}m Approximation is a most popular method to obtain the low-rank approximation of kernel matrix $K$ (see \cite{Williams2001}\cite{Petros2005}\cite{Zhang2010}\cite{Si2016} and the references therein). The low-rank approximation method is not the point of this paper. For simplicity, we employ Zhou's pivoting Cholesky factorization method\cite{sszhou2016}. Let $B\subset M$ corresponding to the indices of the $r$ landmark points, $K_{MB}\in\Re^{m\times r}$ be the sub-matrix of $K$ whose elements are $K_{ij}=k\left({\mathbf x_i},{\mathbf x_j}\right)$ for $i\in M$ and $j\in B$, and $K_{BB}\in\Re^{r\times r}$ be defined similarly. By the pivoting Cholesky factorization method in \cite{sszhou2016}, we can obtain the full column rank matrix $P\in\Re^{m\times r}$ satisfying $PP^\top=K_{MB}K_{BB}^{-1}K_{MB}^\top$ as the best rank-$r$ Nystr\"{o}m type approximation of $K$ under the trace norm, and in all process only the selected $r$ columns and the diagonal of the kernel matrix are necessary. If $K_{MB}$ is gotten by some other low-rank approximation methods \cite{Williams2001}\cite{Petros2005}\cite{Zhang2010}\cite{Si2016}, let $P=K_{MB}K_{BB}^{-\frac{1}{2}}$ and the following analysis is the same.

Substituting $PP^\top$ into \eqref{eq:alpha_dense} instead of $K$, \eqref{eq:alpha_dense} is simplified as:
\begin{equation}\label{eq:alpha_dense2}
(m\lambda I_r+P^\top P-\tfrac{1}{m}P^\top ee^\top P)P^\top \alpha=P^\top\left[(\mathbf y - {\gamma^{(t)}})-\tfrac{e^\top(\mathbf y - {\gamma^{(t)}})}{m}e\right],
\end{equation}
where $I_r \in \Re^{r\times r}$ is a identity matrix. By permuting rows of matrix $P$, we get $[P_B^\top, P_N^\top]^\top$, where $P_B\in \Re^{r\times r}$ is a full rank matrix (and will be a lower triangular matrix if $P$ is obtained as \cite{sszhou2016}, hence $P_B^{-1}$ is computed with cost $O(r^2)$ instead of $O(r^3)$), and $P_N$ is comprised by the rest $m-r$ rows of $P$. Correspondingly, let $\alpha=[\alpha_B^\top, \alpha_N^\top]^\top$, then we have
\begin{equation}\label{eq:alpha_Chol_BN}
\left\{\begin{split}
&{\alpha _B} = \left(P_{B}^\top\right)^{-1}J^{-1}{P^\top}\left(\mathbf y - {\gamma^{\left(t\right)}}-\frac{e^\top \left(\mathbf y -{\gamma^{(t)}}\right)}me\right),\\
&{\alpha _N}=0
\end{split}\right.
\end{equation}
is the sparse solution of \eqref{eq:alpha_dense2}, where
\begin{equation}\label{eq:J}
J= {\left(m\lambda {I_r} + {P^\top}P\right)-\frac{1}m\left(e^\top P\right)^\top e^\top P}.
\end{equation}
So the sparse R-LSSVM (SR-LSSVM) algorithm is obtained by iteratively updating $(\alpha^{(t+1)}, b^{(t+1)})$ as follows:
\begin{numcases}{}
{\alpha _B^{(t+1)}} = \left(P_{B}^\top\right)^{-1}J^{-1}{P^\top}\left(\mathbf y - {\gamma^{\left(t\right)}}-\frac{e^\top \left(\mathbf y -{\gamma^{(t)}}\right)}me\right),\label{eq:alpha_Chol}\\
{\alpha _N^{(t+1)}}=0,\nonumber\\
b^{(t+1)}=\frac{1}m\left[e^\top\left(\mathbf y - {\gamma^{(t)}}\right)-e^\top PP_B^\top\alpha _B^{(t+1)}\right].\label{eq:b_Chol}
\end{numcases}
%
\subsection{Sparse R-LSSVM algorithm}

From the above analysis, our SR-LSSVM algorithm is listed as Algorithm \ref{alg:1}.

\begin{algorithm}[!h]
\renewcommand{\algorithmicrequire}{\textbf{Input:}}
\renewcommand\algorithmicensure {\textbf{Output:} }
\linespread{1.1}\selectfont
\caption{\textbf{\emph{SR-LSSVM}}---\textbf{S}parse \textbf{R-LSSVM}}
\label{alg:a1}
\begin{algorithmic}[1]\label{alg:1}
\REQUIRE Training set $\mathbf T=\left\{\mathbf x_i,y_i\right\}_{i=1}^m $, kernel matrix $K$, the regularization parameter $\lambda>0$, the truncated parameter $\tau$, the stop criterion $\varepsilon>0$, $r=|B|$.
\ENSURE $\alpha_B$ and $b$.
\STATE Find $P$ and $B$ such that $K\approx PP^\top$, 
$\gamma^{(0)}=0\in\Re^m$;
\STATE Compute $J$ as \eqref{eq:J}. Set $t=0$;
\STATE Update $\alpha_B^{(t+1)}$ and $b^{(t+1)}$ by \eqref{eq:alpha_Chol} and \eqref{eq:b_Chol};
\STATE Set $\xi^{(t+1)}={\mathbf y}-PP_B^\top\alpha_B^{(t+1)}-b^{(t+1)}$, and compute ${\gamma_i^{(t+1)}}$ by \eqref{eq:gamma_2};
\IF{$\left\| {{\gamma^{(t+1)}} - {\gamma^{(t)}}} \right\| < \varepsilon $}
\STATE stop with $\alpha_B=\alpha_B^{(t+1)}$ and $b=b^{(t+1)}$;
\ELSE
\STATE let $t=t+1$, go to step 3.
\ENDIF
\end{algorithmic}
\vspace*{-3pt}
\end{algorithm}

%
%
%
%
%
%
%

After obtaining the optimal $\alpha_B$ and $b$ by Algorithm \ref{alg:1}, the decision function for regression is:
\begin{equation}\label{eq:decision_function}
f(\mathbf x) = \sum\limits_{i \in B} {{\alpha _i}k\left({\mathbf x_i},\mathbf x\right)}+b.
\end{equation}
For classification, the decision function is $\texttt{sgn}{f(\mathbf x)}$.
We give some comments about Algorithm \ref{alg:1}.

\emph{\textbf{Comment 1}}. If let $\gamma^{(0)}=0$ as the starting point, the first cycle of Algorithm \ref{alg:1} is equivalent to solving the primal LSSVM (P-LSSVM) problem\cite{sszhou2016}.

\emph{\textbf{Comment 2}}. In equation \eqref{eq:alpha_Chol}, if we set $$\upsilon^{(t)}=J^{-1}{P^\top}\left(\mathbf y - {\gamma^{\left(t\right)}}-\frac{e^\top \left(\mathbf y -{\gamma^{(t)}}\right)}me\right),$$ then ${\alpha _B^{(t+1)}} = \left(P_{B}^\top\right)^{-1}\upsilon^{(t)}$ and $b^{(t+1)}=\frac{1}m\left[e^\top\left(\mathbf y - {\gamma^{(t)}}\right)-e^\top P\upsilon^{(t)}\right]$, so $\xi^{(t+1)}={\mathbf y}-P\upsilon^{(t)}-b^{(t+1)}$. In step 3 of Algorithm \ref{alg:1}, we compute $\upsilon^{(t)}$ instead of $\alpha_B^{(t+1)}$, and the cost of step 3 is decreased further. The output $\alpha_B$ can only be calculated at the last round by $\upsilon^{(t)}$ .

\emph{\textbf{Comment 3}}. To promote computational efficiency, Equation \eqref{eq:alpha_Chol} can be rewritten as:
\begin{equation}\label{eq:alpha_sparse}
\alpha_B^{(t+1)}=\alpha_{LS}-G\left(P_{S_{t}}^\top\gamma^{(t)}_{S_{t}}-\frac{e^\top \gamma^{(t)}}{m}\widehat{P}\right),
\end{equation}
where $G=\left(P_{B}^\top\right)^{-1}J^{ - 1}$, $\widehat{P}=P^\top e$, $\alpha_{LS}=G({P^\top}\mathbf y-\frac{e^\top \mathbf y}{m}\widehat{P})$ is the sparse solution of primal LSSVM, $S_t\subset M$ is the index set of nonzero elements of $\gamma^ {(t)}$, $P_{S_{t}}$ is comprised by several rows of $P$, and the indexes of these rows in $P$ correspond to the elements in $S_t$, $\gamma^{(t)}_{S_{t}}$ is a vector comprised of nonzero elements of $\gamma^ {(t)}$.

Then the step 2 and 3 in Algorithm \ref{alg:1} can be replaced with the following:

Step 2': Compute $J,~G,~\widehat{P}$ and $\alpha_{LS}$. Set $t=0$;

Step 3': Update $\alpha_B^{(t+1)}$ and $b^{(t+1)}$ by \eqref{eq:alpha_sparse} and \eqref{eq:b_Chol} respectively.

\emph{\textbf{Comment 4}}. In Algorithm \ref{alg:1}, the parameter $\tau$ limits the upper bound of loss function. $\tau$ should not be set too large or small. The improper $\tau$ results in poor generalization performance. To overcome the sensitivity of the loss function to $\tau$, we can tone $\tau$ as follows. Firstly, set a little larger $\tau$, such as $\tau=\delta*\max \{|\xi_1|, \cdots, |\xi_m|\}$, where $0<\delta<1$. Then add the following step between the step 3 and step 4 in Algorithm \ref{alg:1}: reduce $\tau$ if $\left\| {{\gamma^{(t+1)}} - {\gamma^{(t)}}} \right\|$ is small until $\tau\leq \tau_{\min}$, where $ \tau_{\min}$ is the minimum of $\tau$ we set.


\subsection{Convergence and Complexity analysis}
 CCCP is globally or locally convergent, see \cite{Yuille2003} \cite{Tao2014} \cite{Bharath2012}. 
Similar to the convergence proof of DCA (DC Algorithm) for general DC programs in article \cite{PHAM1997}, we have the following Lemma.
 \begin{lemma}\label{lemma:convergence}
  If the optimal value of the problem \eqref{eq:vex_cave} is finite, and the infinite sequences $(\alpha^{(t)},b^{(t)})$ and ${\partial H_2(\alpha^{(t)},b^{(t)})}$ are bounded, then every limit point $(\widetilde{\alpha},\widetilde{b})$ of the sequence $(\alpha^{(t)},b^{(t)})$ is a generalized KKT point of $H_1(\alpha,b)-H_2(\alpha,b)$.
  \end{lemma}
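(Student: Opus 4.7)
The plan is to follow the standard DCA convergence argument of Pham and Le Thi, adapting it to the specific DC decomposition $H = H_1 - H_2$ used here. First I would exploit the iterate definition \eqref{eq:alphab_t+1}: since $(\alpha^{(t+1)}, b^{(t+1)})$ minimizes the convex surrogate $H_1(\alpha,b) - \langle [\alpha^\top,b]^\top, g^{(t)}\rangle$ with $g^{(t)} \in \partial H_2(\alpha^{(t)},b^{(t)})$, the first-order optimality condition gives $g^{(t)} \in \partial H_1(\alpha^{(t+1)}, b^{(t+1)})$. This is the key algebraic relation I would use throughout.

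Next I would establish monotonic decrease of $H(\alpha^{(t)}, b^{(t)})$. Writing $z^{(t)} = [\alpha^{(t)\top}, b^{(t)}]^\top$, the two subgradient inequalities
\begin{equation*}
H_1(z^{(t)}) \ge H_1(z^{(t+1)}) + \langle g^{(t)}, z^{(t)}-z^{(t+1)}\rangle,
\end{equation*}
\begin{equation*}
H_2(z^{(t+1)}) \ge H_2(z^{(t)}) + \langle g^{(t)}, z^{(t+1)}-z^{(t)}\rangle,
\end{equation*}
which hold because $g^{(t)} \in \partial H_1(z^{(t+1)}) \cap \partial H_2(z^{(t)})$, add up to $H(z^{(t)}) \ge H(z^{(t+1)})$. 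Combined with the finiteness of the optimal value (hence boundedness of $H$ from below), the sequence $\{H(z^{(t)})\}$ converges, and the telescoped strong-convexity-style gap $H(z^{(t)}) - H(z^{(t+1)})$ tends to zero.

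Then I would pass to limits along a convergent subsequence. By the boundedness assumption on $\{z^{(t)}\}$ and $\{g^{(t)}\}$, pick a subsequence $t_k$ with $z^{(t_k)} \to \widetilde{z} := [\widetilde\alpha^\top, \widetilde b]^\top$ and $g^{(t_k)} \to \widetilde g$. Since $\partial H_2$ is an upper semicontinuous (closed-graph) set-valued map on a finite-dimensional space for the proper convex continuous function $H_2$, the inclusion $g^{(t_k)} \in \partial H_2(z^{(t_k)})$ yields $\widetilde g \in \partial H_2(\widetilde z)$ in the limit. Similarly, using $g^{(t_k)} \in \partial H_1(z^{(t_k+1)})$ together with $z^{(t_k+1)} \to \widetilde z$ (which follows from $\|z^{(t+1)} - z^{(t)}\| \to 0$, itself a consequence of strong convexity of $H_1$ in the parts that are coercive plus the telescoping argument above), I would conclude $\widetilde g \in \partial H_1(\widetilde z)$. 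Hence $\partial H_1(\widetilde z) \cap \partial H_2(\widetilde z) \ne \emptyset$, which is precisely the generalized KKT condition $0 \in \partial H_1(\widetilde z) - \partial H_2(\widetilde z)$.

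The main obstacle is justifying $z^{(t_k+1)} \to \widetilde z$ along the same subsequence, since a priori the shifted indices might escape. I would handle this either by invoking uniform strong convexity of $H_1$ in $(\alpha,b)$ (guaranteed when $\lambda > 0$ gives a positive definite quadratic piece after eliminating $b$, or by adding a proximal term as is standard in DCA), which forces $\|z^{(t+1)} - z^{(t)}\|^2 \le c\,(H(z^{(t)}) - H(z^{(t+1)}))$ and hence $\|z^{(t+1)} - z^{(t)}\| \to 0$; or, failing that, by extracting a further subsequence and using upper semicontinuity of $\partial H_1$ directly. The rest is bookkeeping that mirrors Theorem 3(iii) of \cite{PHAM1997}.
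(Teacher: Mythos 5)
The paper does not actually write out a proof of this lemma; it simply defers to the DCA convergence theory of \cite{PHAM1997}, so your task was to reconstruct that argument. Your overall plan is the right one and matches what the paper intends: the identity $g^{(t)}\in\partial H_1(z^{(t+1)})\cap\partial H_2(z^{(t)})$ from the optimality condition of \eqref{eq:alphab_t+1}, the two subgradient inequalities giving monotone decrease of $H$, boundedness from below giving convergence of $H(z^{(t)})$, and a subsequence argument to land on a critical point. The descent part of your argument is correct as written.

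There is, however, a genuine gap in the step you yourself flag as the main obstacle, namely showing $\widetilde g\in\partial H_1(\widetilde z)$. Your primary repair invokes strong convexity of $H_1$ to force $\|z^{(t+1)}-z^{(t)}\|\to 0$, but $H_1$ is \emph{not} strongly convex here: its Hessian is $\lambda\,\mathrm{diag}(K,0)+\frac{1}{m}[K,e]^\top[K,e]$, which is singular whenever $K$ is singular, and the low-rank (hence singular) $K$ regime is precisely the setting this paper cares about. Adding a proximal term would restore strong convexity but changes the iteration \eqref{eq:alphab_t+1}, so it does not prove the lemma as stated. Your fallback (a further subsequence plus upper semicontinuity of $\partial H_1$) only yields $\widetilde g\in\partial H_1(\widehat z)$ for some other limit point $\widehat z$ of the shifted sequence, not the inclusion at $\widetilde z$. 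The standard way to close this, and the one used in \cite{PHAM1997}, avoids shifted indices entirely: since $g^{(t)}\in\partial H_1(z^{(t+1)})$, Fenchel's equality gives $H_1(z^{(t+1)})+H_1^*(g^{(t)})=\langle g^{(t)},z^{(t+1)}\rangle$, so the per-step decrease satisfies $H(z^{(t)})-H(z^{(t+1)})\ge H_1(z^{(t)})+H_1^*(g^{(t)})-\langle g^{(t)},z^{(t)}\rangle\ge 0$; summability of the decreases forces this Fenchel--Young gap to zero, and lower semicontinuity of $H_1+H_1^*$ along the subsequence $(z^{(t_k)},g^{(t_k)})\to(\widetilde z,\widetilde g)$ yields $H_1(\widetilde z)+H_1^*(\widetilde g)=\langle\widetilde g,\widetilde z\rangle$, i.e.\ $\widetilde g\in\partial H_1(\widetilde z)$, which together with $\widetilde g\in\partial H_2(\widetilde z)$ (closed graph of $\partial H_2$) gives the generalized KKT condition. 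With that substitution your proof is complete.
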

Obvious, the objective function of \eqref{eq:vex_cave} and \eqref{eq:PR-LSSVM} is bounded below. Assume the prediction error variable $\xi_i^{(t)}$ is bounded, which is reasonable in real application, then $\gamma_i^{(t)}$ is bounded by  \eqref{eq:gamma_2}. So $(\alpha^{(t)},b^{(t)})$ and $\partial H_2(\alpha^{(t)},b^{(t)})= [{\gamma^{(t)}}^\top K, {\gamma^{(t)}}^\top e]^\top$ are also bounded because of the boundedness of $(P_B^\top)^{-1}$, $J^{-1}$ and $P^\top$ in \eqref{eq:alpha_Chol} and \eqref{eq:b_Chol}. By Lemma \ref{lemma:convergence}, we get the following theorem.
\begin{theorem}Assume the predict error ${\xi}={y} - {\mathbf w^\top}\varphi \left({\mathbf x}\right)-b$ is bounded for all given samples $(\mathbf x,y)$ with selected parameters $\mathbf w$ and $b$, then limit point of the sequence $(\alpha^{(t)},b^{(t)})$ is the generalized KKT point of the problem \eqref{eq:vex_cave}, that is, Algorithm \ref{alg:1} is convergent.
\end{theorem}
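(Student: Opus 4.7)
The plan is to verify the three hypotheses of Lemma~\ref{lemma:convergence} in our concrete setting and then invoke it directly. Those hypotheses are: the optimal value of \eqref{eq:vex_cave} is finite, the iterate sequence $(\alpha^{(t)}, b^{(t)})$ is bounded, and the subgradient sequence $\partial H_2(\alpha^{(t)}, b^{(t)})$ is bounded. The standing assumption on the predict errors, together with the closed-form iteration formulas already derived, should give each of these quickly.

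For finiteness of the optimal value, I would observe that $H(\alpha,b) = \frac{\lambda}{2}\alpha^\top K\alpha + \frac{1}{m}\sum_i L_\tau(y_i - K_{iM}\alpha - b) \geq 0$, since $K$ is PSD and $L_\tau \geq 0$; hence the infimum is finite (and in $[0,\infty)$). For boundedness of $\gamma^{(t)}$, I would inspect \eqref{eq:gamma_2}: the multiplicative factor on $\xi_i^{(t)}$ lies in $[0,1]$ for all $i, t$, so $|\gamma_i^{(t)}| \leq |\xi_i^{(t)}|$, which is bounded by hypothesis. Next, \eqref{eq:alpha_Chol} and \eqref{eq:b_Chol} express $\alpha_B^{(t+1)}$ and $b^{(t+1)}$ as affine images of $\gamma^{(t)}$ under the fixed matrices $(P_B^\top)^{-1}$, $J^{-1}$, $P^\top$ and the fixed vector $e$; none of these depend on $t$, so bounded input yields a bounded output and the iterates $(\alpha^{(t)}, b^{(t)})$ are uniformly bounded.

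For the third hypothesis, a direct computation gives $\partial H_2(\alpha^{(t)}, b^{(t)}) = \frac{1}{m}[(\gamma^{(t)})^\top K,\ (\gamma^{(t)})^\top e]^\top$ (with $K$ replaced by the surrogate $PP^\top$ when the low-rank approximation is in force). Since $\gamma^{(t)}$ is bounded and the data $K$ (or $PP^\top$) and $e$ are fixed with bounded norms, this subgradient sequence is bounded. All three hypotheses being in place, Lemma~\ref{lemma:convergence} yields that every limit point $(\widetilde{\alpha}, \widetilde{b})$ of $(\alpha^{(t)}, b^{(t)})$ is a generalized KKT point of \eqref{eq:vex_cave}, which is exactly the convergence claim for Algorithm~\ref{alg:1}.

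The main obstacle is the mismatch between the abstract DC iteration covered by Lemma~\ref{lemma:convergence} and Algorithm~\ref{alg:1}, which uses the smoothed loss $\bar L_2$ in place of $L_2$ and the low-rank surrogate $PP^\top$ in place of $K$. I would handle this by reading Algorithm~\ref{alg:1} as the CCCP iteration applied to the modified DC program obtained by these two substitutions: with $K$ replaced by $PP^\top$ the quadratic regularizer stays convex and the splitting $H_1 - H_2$ remains a DC decomposition, while $\bar L_2$ differs uniformly from $L_2$ by at most $(\log 2)/p$ and the formula \eqref{eq:gamma_2} provides a single-valued selection that converges into $\partial L_2$ as $p \to \infty$. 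Under this reinterpretation the boundedness arguments above apply verbatim and Lemma~\ref{lemma:convergence} delivers the conclusion.
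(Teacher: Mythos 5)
Your proof is correct and follows essentially the same route as the paper: it verifies that the objective is bounded below, that bounded errors $\xi_i^{(t)}$ force $|\gamma_i^{(t)}|\leq|\xi_i^{(t)}|$ via the factor in $[0,1]$ in \eqref{eq:gamma_2}, that the fixed matrices $(P_B^\top)^{-1}$, $J^{-1}$, $P^\top$ then bound the iterates and the subgradient sequence, and finally invokes Lemma~\ref{lemma:convergence}. You are in fact slightly more careful than the paper in flagging (and dispatching) the mismatch between the exact DC program \eqref{eq:vex_cave} and the smoothed, low-rank iteration that Algorithm~\ref{alg:1} actually runs.
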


For Algorithm \ref{alg:1}, the computation cost of step 1 and step 2 are both $O\left(mr^2\right)$ $(r\ll m)$\cite{sszhou2016}. The complexity of iteratively solving step 3 is $O\left(T_smr\right)$, where $T_s$ is the total iterative number of SR-LSSVM. So the overall complexity of this algorithm is $O\left(mr^2 +T_smr\right)$. If we utilize the technique in comment 3 to compute $\alpha_B^{(t+1)}$, then the complexity of step $3$ in Algorithm \ref{alg:1} reduced to $O\left(|S_t|r\right)$ $(|S_t|<m)$ which is the complexity of step $3'$. In comparison, the computational complexities of Wang's and Yang's R-LSSVM algorithms in \cite{KuainiWang2014} and \cite{XiaoweiYang2014} are both $ O\left(T_dm^3\right)$, where $T_d$ is the iterations of their algorithms. It is obvious that our method has smaller computational complexity than existing approaches.

 \textbf{Parallel computing potential.} In the Algorithm 1, some calculations are easy to perform, so serial computing is enough for them. However, for some costly calculations, we can utilize parallel computing to further improve computing efficiency. The main computational cost of Algorithm 1 is from computing $P^\top P$, which can be implemented in parallel. For example, $P$ can be partitioned into $k$ chunks according to row satisfying $P^\top=[P_1^\top,\ldots,P_k^\top]$, so $P^\top P=\sum\limits_{i=1}^k P_i^\top P_i$ which can be efficiently calculated by the parallel algorithm of matrix multiplication, where $P_i$ is the $i$th block of the matrix $P$.
 \section{Numerical experiments and discussions}
To examine the validity of the proposed algorithm, we compare our SR-LSSVM with the R-LSSVM-W\cite{KuainiWang2014} (Wang's algorithm for R-LSSVM), R-LSSVM-Y\cite{XiaoweiYang2014} (Yang's algorithm for R-LSSVM), the classical LSSVM, W-LSSVM\cite{J.A.K.Suykens2002}, the FS-LSSVM\cite{Suykens2002} which is operated in the LS-SVMlab v1.8 software \cite{Brabanter2011_lssvmtoolbox}\footnote[1]{Codes are available in http://www.esat.kuleuven.be/sista/lssvmlab/.} and the SVMs (C-SVC for classification and $\epsilon$-SVR for regression) which are implemented in the LIBSVM software\footnote[2]{Codes are available in https://www.csie.ntu.edu.tw/~cjlin/libsvm/.} for medium datasets. For some large-scale problems, we only compare the proposed algorithm with some sparse algorithms, such as PCP-LSSVM\cite{sszhou2016}\footnote[3]{Codes and article can be downloaded from http://web.xidian.edu.cn/sszhou/paper.html}, FS-LSSVM\cite{Suykens2002}, Cholesky with side information (CSI)\cite{Bach_csi2005}\footnote[4]{Codes are available in http://www.di.ens.fr/$\sim$fbach/csi/index.html.} and C-SVC for classification or $\epsilon$-SVR for regression, since the others can not apply in this case.

All computations are implemented in windows 8 with Matlab R2014a. The whole experiments are run on a PC with an Intel Core i5-4210U CPU and a maximum of 8G bytes of memory available for all processes.


We fixed the values of smoothing parameter $p=10^{4}$ in SR-LSSVM and the stop criterion $\varepsilon=10^{-2}$ respectively. For all the data sets, we use cross-validation procedure and grid search to search the best values of the parameter $\lambda, \sigma, \tau$ and $h$, where $\sigma$ is the parameter in Gaussian kernel function $k(\mathbf x_i, \mathbf x_j)=\exp(-\sigma\|\mathbf x_i-\mathbf x_j\|^2)$, and $h$ is the smooth parameter in method R-LSSVM-Y.

For R-LSSVM-W and R-LSSVM-Y, the running time in our article is much less than those in \cite{KuainiWang2014}\cite{XiaoweiYang2014} for the same data sets and the total complexity is reduced from $O(T_dm^3)$ \cite{KuainiWang2014}\cite{XiaoweiYang2014}  to $O(m^3+T_dm^2)$, where the coefficient matrix of \eqref{eq:dr_lssvm} is decomposed by Cholesky factorization once and such decomposition is unchanged per loop in our experiments.

\subsection{Classification experiments}
In this section, we test one synthetic classification data set and some benchmark classification data sets to illustrate the effectiveness of the SR-LSSVM. For benchmark datasets, each attribute of the samples is normalized into $[-1,~1]$, and these datasets are separated into two groups: the medium size datasets group and the large-scale datasets group. All of them are downloaded from \cite{lib}. The experimental results on Adult data set show the reason why we separate these data sets into two groups. Finally, we test the robustness of our proposed algorithm for large-scale data sets with outliers on Cod-RNA dataset. Outliers are generated by the following procedure. We choose 30\% of samples which are far from decision hyperplane, then randomly sample 1/3 of them and flip their labels to simulate outliers.
\subsubsection{Synthetic classification dataset experiment}
\begin{figure}[!t]
 \centering
 \includegraphics[width=1\textwidth]{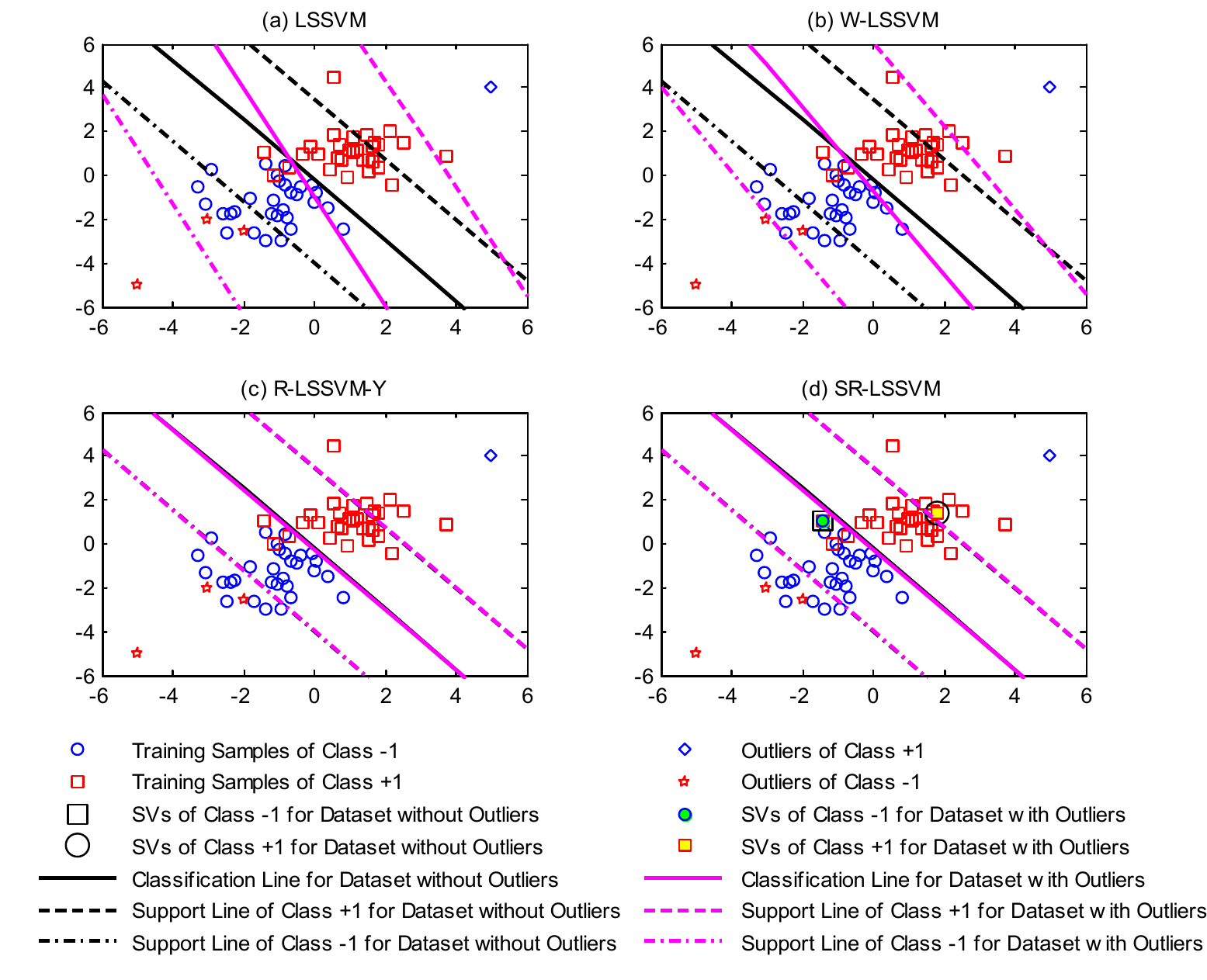}
 \caption{Comparison of the proposed approach SR-LSSVM with LSSVM, W-LSSVM and R-LSSVM-Y for linearly inseparable classification dataset with and without outliers. The numbers of the support vectors are both 2 for datasets with and without outliers for (d) SR-LSSVM. We do not mark SVs in the subgraphs (a)-(c), because almost all of training samples are SVs for LSSVM, W-LSSVM and R-LSSVM-Y. For dataset without outliers, the test accuracies are all 91.50\% for these four algorithms, and for dataset with outliers, the test accuracies are 89.50\%, 90.00\%, 91.50\% and 91.50\% for LSSVM, W-LSSVM, R-LSSVM-Y and SR-LSSVM respectively}
 \label{fig:class_normal}
\end{figure}

To compare the robustness and spareness of four algorithms LSSVM, W-LSSVM, R-LSSVM-Y and SR-LSSVM, we conduct an experiment on a linear binary classification data set including 60 training samples and 100 testing samples. Fig. \ref{fig:class_normal} shows the experimental results. To simulate outliers, we add 4 training samples labeled with wrong classes. They are marked as '$\diamond$' and '$\ast$' for positive and negative classes respectively. Through grid search, we obtain the best parameter values for this data set are $m\lambda=10^{-2}, \tau=1.5, h=0.01$.

Fig. \ref{fig:class_normal} illustrates that the decision lines of algorithms LSSVM and W-LSSVM change greatly and these two methods have lower accuracies than SR-LSSVM and R-LSSVM-Y after adding outliers. In contrast, the decision boundaries of SR-LSSVM and R-LSSVM-Y are almost unchanged and the accuracies of these two approaches remain stable before and after adding outliers. So SR-LSSVM is insensitive to outliers. Moreover, almost all of training samples are SVs for LSSVM, W-LSSVM and R-LSSVM-Y. By contrast, for SR-LSSVM, the support vector sizes are both only 2 for data sets with and without outliers. So the proposed algorithm is sparseness, which can accelerate the training speed of our approach in processing large scale problems.

\subsubsection{Medium-scale benchmark classification datasets experiments}
\begin{table}[!htp]\scriptsize
\centering
\renewcommand\arraystretch{0.85}
\caption{Comparison of the numbers of iterations, training time (seconds), mean number of support vectors (denoted by nSVs) and accuracies (\%) of different algorithms on benchmark classification data sets with outliers (10\%). The standard deviations are given in brackets. '-' means this parameter is not used by this method. The best values are highlighted in bold.}
\begin{threeparttable}
\setlength{\tabcolsep}{1.3pt}
\begin{tabular}{llrrrrrrrr}
\hline
Data &Algorithms&$m\lambda$&$\sigma$&$\tau$&$h$&Iterations&Training&nSVs&Accuracies(\%)\\
(Train,&&&&&&&Time(s)\\Test)\\
\hline
Pendigits$^1$&C-SVC&$10^{-2}$&$2^{-4}$&-&-&-&0.36(0.02)&433.5(7.7)&99.95(0.001)\\
(1466,&LSSVM&$10^{-3}$&$2^{-4}$&-&-&-&0.12(0.01)&1466($0$)&99.26(0.005)\\
733)&W-LSSVM&$10^{-3}$&$2^{-4}$&-&-&-&0.18(0.01)&1464.6(1.7)&99.92(0.002)\\
$l=16$&FS-LSSVM&$10^{-3}$&$2^{-4}$&-&-&-&0.19(0.01)&\textbf{73}(0)&99.90(0.001)\\
&R-LSSVM-W&$10^{-3}$&$2^{-4}$&1.5&-&16.1(1.7)&0.22(0.01)&1466(0)&99.37(0.003)\\
&R-LSSVM-Y&$10^{-3}$&$2^{-4}$&1.5&0.25&12.2(1.5)&0.19(0.01)&1436.8(8.0)&99.09(0.005)\\
&SR-LSSVM&$10^{-3}$&$2^{-4}$&1.5&-&\textbf{8.5}(0.9)&\textbf{0.03}($0.00$)&\textbf{73}($0$)&\textbf{99.96}(0.001)\\
\hline
Protein$^2$&C-SVC&$10^{0}$&$2^{-5}$&-&-&-&55.64(0.47)&5486.8(27.1)&77.98($0.004$)\\
(8186,&LSSVM&$10^{0}$&$2^{-5}$&-&-&-&22.67(0.30)&8185.9(0.32)&78.22(0.002)\\
3509)&W-LSSVM&$10^{0}$&$2^{-5}$&-&-&-&27.77(0.30)&8185.7(0.67)&\textbf{78.24}(0.003)\\
$l=357$&FS-LSSVM&$10^{0}$&$2^{-5}$&-&-&-&27.55(0.61)&\textbf{408.1}(1.20)&77.00(0.003)\\
&R-LSSVM-W&$10^{0}$&$2^{-5}$&0.8&-&34.9(4.2)&28.56(1.10)&8184.8(0.92)&77.81(0.023)\\
&R-LSSVM-Y&$10^{0}$&$2^{-5}$&0.8&0.7&16.7(3.4)&25.17(0.81)&7876.7(46.6)&78.23(0.004)\\
&SR-LSSVM&$10^{0}$&$2^{-5}$&0.8&-&\textbf{6}(0)&\textbf{11.65}($0.03$)&409($0$)&78.04(0.002)\\
\hline
Satimage$^3$&C-SVC&$10^{0}$&$2^{-1}$&-&-&-&\textbf{0.25}(0.01)&693.5(13.6)&99.86($0.001$)\\
(2110,&LSSVM&$10^{0}$&$2^{-1}$&-&-&-&0.76(0.01)&2109.6(0.7)&99.23(0.002)\\
931)&W-LSSVM&$10^{0}$&$2^{-1}$&-&-&-&0.90(0.02)&1897.5(2.1)&99.91(0.001)\\
$l=36$&FS-LSSVM&$10^{0}$&$2^{-1}$&-&-&-&0.50(0.02)&\textbf{105}(0)&97.93(0.008)\\
&R-LSSVM-W&$10^{0}$&$2^{-1}$&0.5&-&17.1(1.5)&0.88(0.03)&2107.3(1.2)&99.90(0.001)\\
&R-LSSVM-Y&$10^{0}$&$2^{-1}$&0.5&0.3&11.7(1.2)&0.87(0.03)&1916.5(15.1)&99.88(0.001)\\
&SR-LSSVM&$10^{0}$&$2^{-1}$&0.5&-&\textbf{6}(2.7)&0.27($0.00$)&\textbf{105}($0$)&\textbf{99.97}(0.001)\\
\hline
USPS$^4$&C-SVC&$10^{0}$&$2^{-7}$&-&-&-&\textbf{0.92}(0.02)&646.9(14.4)&99.34($0.001$)\\
(2199,&LSSVM&$10^{0}$&$2^{-7}$&-&-&-&2.53(0.01)&2198.7(0.5)&99.34(0.002)\\
623)&W-LSSVM&$10^{0}$&$2^{-7}$&-&-&-&2.67(0.01)&1973.8(3.1)&99.49(0.001)\\
$l=256$&FS-LSSVM&$10^{0}$&$2^{-7}$&-&-&-&1.21(0.01)&109(0)&98.28(0.006)\\
&R-LSSVM-W&$10^{0}$&$2^{-7}$&1.1&-&9.3(0.84)&2.60(0.02)&2193.7(2.7)&\textbf{99.52}(0.000)\\
&R-LSSVM-Y&$10^{0}$&$2^{-7}$&1.1&0.15&10.2(0.92)&2.60(0.02)&2002.4(14.9)&\textbf{99.52}(0.000)\\
&SR-LSSVM&$10^{0}$&$2^{-7}$&1.1&-&\textbf{6}(0.79)&1.91($0.01$)&\textbf{108.7}($0.3$)&\textbf{99.52}(0.000)\\
\hline
Splice&C-SVC&$10^{0}$&$2^{-9}$&-&-&-&\textbf{0.12(0.01)}&820.8(8.3)&76.38($0.08$)\\
(1000,&LSSVM&$10^{-2}$&$2^{-12}$&-&-&-&0.19(0.01)&1000(0)&75.99(0.10)\\
2175)&W-LSSVM&$10^{-2}$&$2^{-12}$&-&-&-&0.20(0.01)&1000(0)&76.04(0.10)\\
$l=60$&FS-LSSVM&$10^{-2}$&$2^{-12}$&-&-&-&0.42(0.02)&\textbf{100}(0)&76.66(0.07)\\
&R-LSSVM-W&$10^{-2}$&$2^{-12}$&0.9&-&33.8(7.2)&0.29(0.03)&1000(0)&75.15(0.14)\\
&R-LSSVM-Y&$10^{-2}$&$2^{-12}$&0.9&0.5&\textbf{15.1(2.9)}&0.22(0.01)&947.6(19.5)&80.50(0.04)\\
&SR-LSSVM&$10^{-2}$&$2^{-12}$&0.9&-&25.8(7.1)&0.19($0.01$)&\textbf{100}($0$)&\textbf{81.27(0.03)}\\
\hline
Mushrooms&C-SVC&$10^{0}$&$2^{-3}$&-&-&-&2.33(0.04)&2244.8(27.1)&99.99($0.000$)\\
(5614,&LSSVM&$10^{-1}$&$2^{-3}$&-&-&-&5.75(0.08)&5415.8(0.4)&98.67(0.002)\\
 2708)&W-LSSVM&$10^{-1}$&$2^{-3}$&-&-&-&7.41(0.19)&4837.8(12.2)&99.90(0.001)\\
$l=112$&FS-LSSVM&$10^{-1}$&$2^{-3}$&-&-&-&2.69(0.02)&\textbf{268.9}(0.7)&99.66(0.003)\\
&R-LSSVM-W&$10^{-1}$&$2^{-3}$&0.6&-&26.4(4.0)&7.57(0.22)&5381.8(6.6)&99.97(0.000)\\
&R-LSSVM-Y&$10^{-1}$&$2^{-3}$&0.6&0.3&22.7(4.2)&7.48(0.18)&4928(16.9)&99.71(0.001)\\
&SR-LSSVM&$10^{-1}$&$2^{-3}$&0.6&-&\textbf{6}(0)&\textbf{1.28}($0.01$)&270($0$)&\textbf{100}(0)\\
\hline
\end{tabular}
\begin{tablenotes}
\scriptsize
  \item[1]Pendigits is a pen-based recognition of handwritten digits data set to classify the digits 0 to 9. We only classify the digit 3 versus 4 here.
  \item[2]Protein is a multi-class data set with 3 classes. Here a binary classification problem is trained to separate class 1 from 2.
  \item[3]Satimage is comprised by 6 classes. Here the task of classifying class 1 versus 6 is trained.
  \item[4]USPS is a muti-class data set with 10 classes. Here a binary classification problem is trained to separate class 1 from 2.
\end{tablenotes}
\end{threeparttable}
\label{tab:classification_small_result}
\end{table}

Table \ref{tab:classification_small_result} reports the data information, optimal parameters and experimental results for the medium-scale classification data sets with outliers. The best results are highlighted in bold. In Table \ref{tab:classification_small_result}, we set $r=|B|=0.05 m$ for SR-LSSVM and FS-LSSVM for all data sets except Splice ($r=0.1m$). For C-SVC, the parameter $C=1/(m\lambda)$. All the algorithms independently operate 10 times to get the unbiased results.

As regard to accuracies, Table \ref{tab:classification_small_result} illustrates that our proposed method SR-LSSVM has higher accuracies than any other compared approaches on most data sets. As to training time, our method is faster than other approaches except C-SVC. C-SVC performs well on some medium scale data sets in training speed, but on some larger scale data sets such as Protein and Mushroom, the running speeds of C-SVC are slower than SR-LSSVM. In addition, the accuracies of C-SVC is lower than SR-LSSVM.

In terms of sparseness, SR-LSSVM and FS-LSSVM need much fewer support vectors than other approaches. In other words, these two methods have sparseness. But the accuracy of FS-LSSVM is lower than SR-LSSVM, and FS-LSSVM spends more time than SR-LSSVM on all data sets. C-SVC also displays sparsity but its support vector size is much larger than SR-LSSVM and FS-LSSVM, partly because there exist outliers in the training set.

As respect to iteration times of solving nonconvex programming R-LSSVM, SR-LSSVM needs less iterations than R-LSSVM-W and R-LSSVM-Y to converge to the optimal solution.
\subsubsection{Adult data set experiments}
\begin{figure}[!t]
\centering
\includegraphics[width=1\textwidth]{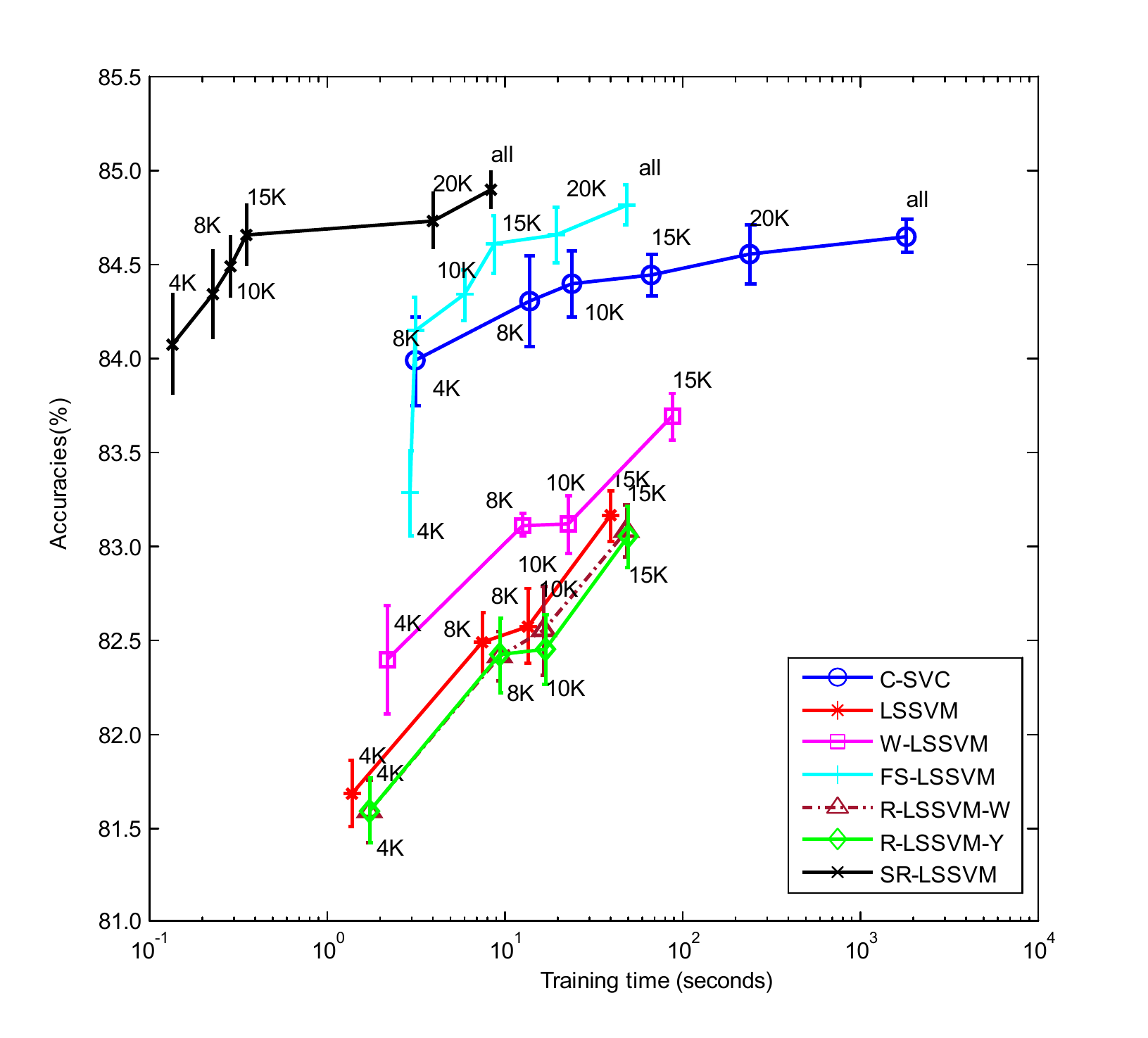}
\caption{The training time, accuracies and their standard deviations of different algorithms on six subsets of the Adult data set with outliers (about 10\%). The horizontal axis is the logarithmic coordinate. The markers 4K to 20K  denote 4000 to 20000 samples in the training stages. 'all' means using all 32561 samples. LSSVM, W-LSSVM, R-LSSVM-W and R-LSSVM-Y are only implemented on the data sets containing less than 15000 samples due to memory limitation of computer. The basic subset size for SR-LSSVM and working set size for FS-LSSVM are both set as 400. In figure, the more left and upper, the better}
\label{fig:adult}
\end{figure}

To investigate the performance of each algorithm on data sets in different sizes, we randomly choose 4000, 8000, 10000, 15000, 20000 and all the 32561 training samples from the training set of Adult data set\cite{lib}. The test set size is 16281.

Fig. \ref{fig:adult} shows the experimental results of all the approaches on the data sets with outliers. The horizontal axis is the logarithmic coordinate in the figure. As to accuracies on these data sets, in general, SR-LSSVM, C-SVC and FS-LSSVM perform better than other methods, and our method SR-LSSVM performs the best. In addition, from Fig. \ref{fig:adult}, we can draw the conclusion that for the medium scale training data sets, especially those with size smaller than 8000, every algorithm runs fast. However, if the training set size exceeds 20000, LSSVM, W-LSSVM, R-LSSVM-W and R-LSSVM-Y cannot operate on our common computer due to lack of memory. So for the large scale benchmark data sets, we do not compare our method SR-LSSVM with LSSVM, W-LSSVM, R-LSSVM-W and R-LSSVM-Y. Moreover, Fig. \ref{fig:adult} also shows that the training time of C-SVC increases rapidly as the sizes of training samples grow larger.

\subsubsection{Large-scale benchmark classification datasets experiments}
Table \ref{tab:classification_big_result} reports the data information, optimal parameters and experimental results for the large-scale data sets with outliers (10\%). We compare our SR-LSSVM with some other sparse algorithms. For Skin-nonskin data set, we randomly select 2/3 of the data as training samples and the rest of the data as testing samples, and for others, we use the default setting in \cite{lib}. All the algorithms operate 5 times independently to get the unbiased results for every dataset. The best results are highlighted in bold.
\begin{table}[!htp]\footnotesize
\centering
\renewcommand\arraystretch{0.85}
\caption{Comparison of the accuracies(\%), training time (seconds) and mean number of support vectors (denoted by nSVs) of SR-LSSVM, PCP-LSSVM, C-SVC, FS-LSSCM and CSI on four large-scale benchmark classification data sets with outliers (10\%). The standard deviations are given in brackets. For C-SVC, the parameter $C=1/(m\lambda)$.}
\begin{threeparttable}
\setlength{\tabcolsep}{2.0pt}
\begin{tabular}{llrrrrrr}
\hline
Data Sets&Algorithms&$m\lambda$&$\sigma$&$\tau$&Training&nSVs&Accuracies(\%)\\
(train, test)&&&&&Time(s)\\
\hline
Skin-nonskin&C-SVC&$10^{0}$&$2^{-2}$&-&1329.4($10.6$)&59910($146.7$)&99.30($0.001$)\\
(163371, 81686)&FS-LSSVM&$10^{-3}$&$2^{-10}$&-&42.6(0.7)&199.2$^1$(0.8)&99.82(0.000)\\
$l=3$&CSI&$10^{-3}$&$2^{-10}$&-&42.9(0.7)&100$^2$(0)&99.84(0.000)\\
&PCP-LSSVM&$10^{-3}$&$2^{-10}$&-&\textbf{32.7}(0.6)&400(0)&\textbf{99.86}(0.000)\\
&SR-LSSVM&$10^{-3}$&$2^{-10}$&1.5&34.8($2.5$)&400($0$)&\textbf{99.86}(0.000)\\
\hline
IJCNN1&C-SVC&$10^{0}$&$2^{-4}$&-&84.3($1.6$)&17103($75.6$)&92.16($0.000$)\\
(49990,91701)&FS-LSSVM&$10^{-3}$&$2^{-8}$&-&34.9(0.8)&398.8(1.0)&94.17(0.014)\\
$l=22$&CSI&$10^{-3}$&$2^{-8}$&-&63.0(1.2)&400(0)&95.10(0.002)\\
&PCP-LSSVM&$10^{-3}$&$2^{-8}$&-&\textbf{2.8}(0.1)&215.6(1.6)&95.39(0.002)\\
&SR-LSSVM&$10^{-3}$&$2^{-8}$&2.5&2.8($0.1$)&215($0$)&\textbf{95.47}($0.002$)\\
\hline
Cod-RNA&C-SVC&$10^{-2}$&$2^{-6}$&-&86.3($0.8$)&22095($42.6$)&94.91(0.000)\\
(59535,271617)&FS-LSSVM&$10^{-4}$&$2^{-1}$&-&36.0(0.8)&399.7(0.5)&94.84(0.001)\\
$l=8$&CSI&$10^{-4}$&$2^{-1}$&-&78.6(0.4)&400(0)&94.99(0.001)\\
&PCP-LSSVM&$10^{-4}$&$2^{-1}$&-&\textbf{6.0}(0.0)&400(0)&94.78(0.001)\\
&SR-LSSVM&$10^{-4}$&$2^{-1}$&1.8&6.4(0.1)&400(0)&\textbf{95.04}(0.001)\\
\hline
Acoustic$^3$&C-SVC&$10^{-1}$&$2^{-4}$&-&278.3($5.6$)&28302($107.0$)&74.87($0.001$)\\
(60562, 15125)&FS-LSSVM&$10^{-6}$&$2^{-4}$&-&75.1(0.5)&399.2(1.0)&\textbf{78.17}(0.002)\\
$l=50$&CSI&$10^{-6}$&$2^{-4}$&-&74.3(0.2)&400(0)&77.62(0.003)\\
&PCP-LSSVM&$10^{-6}$&$2^{-4}$&-&\textbf{40.1}(0.1)&400(0)&77.52(0.001)\\
&SR-LSSVM&$10^{-6}$&$2^{-4}$&1.7&41.4(0.1)&400(0)&77.73(0.001)\\
\hline
\end{tabular}
\begin{tablenotes}
\footnotesize
  \item[1]The working set size is set to 200 for FS-LSSVM, because if we set the working set size at 400, the software cannot operate normally.
  \item[2]The maximal rank of CSI algorithm is 100 for Skin-nonskin data set. If we set this number bigger, the procedure may make an error.
  \item[3]Acoustic data set has three classifications labeled as 1,2,3 respectively. We only choose the second and the third classifications to operate.
\end{tablenotes}
\end{threeparttable}
\label{tab:classification_big_result}
\end{table}

From Table \ref{tab:classification_big_result}, it is obvious that SR-LSSVM also reveals good performances. It has higher prediction accuracies than other algorithms on all data sets except Acoustic. For Acoustic, the accuracy of the SR-LSSVM is a little lower than that of FS-LSSVM method, but the run time of FS-LSSVM is longer than that of SR-LSSVM. Furthermore, it is obvious that C-SVC is the slowest algorithm among these algorithms and the size of its support vectors is much larger than other algorithms. PCP-LSSVM is the fastest algorithm among the compared approaches, but its accuracy is lower than SR-LSSVM.
Based on the above analyses, we conclude that our proposed algorithm SR-LSSVM is more suitable for large-scale classification problems, as it has higher test accuracies, requires less time and produces sparser solutions than other methods.
\subsubsection{Robustness Comparisons for large-scale data set}
In order to compare the robustness of the sparse algorithms PCP-LSSVM, FS-LSSVM, CSI and SR-LSSVM for large-scale data set, we set the rates of outliers at 0\%, 5\% and 10\% respectively on Cod-RNA data set. Table \ref{tab:diffnoise} illustrates the results. From Table \ref{tab:diffnoise}, it is easy to observe that SR-LSSVM has higher accuracies than other approaches, so SR-LSSVM is more robust to outliers than other compared algorithms. Moreover, it is faster than FS-LSSVM and CSI algorithms on data sets with different rates of outliers.
\begin{table}[!htp]\footnotesize
\centering\renewcommand\arraystretch{0.9}
\caption{Comparison the results of Cod-RNA data set with different rates of outliers (0\%, 5\% and 10\%) on four sparse LSSVM based algorithms, which are FS-LSSVM, CSI, PCP-LSSVM and SR-LSSVM. The standard deviations are given in brackets.}
\begin{threeparttable}
\begin{tabular}{l|rrr|c}
\hline
&\multicolumn{3}{c|}{Testing Accuracies(\%)}&{Training Time}\\
\cline{2-4}
{Outlier Rates}&0\%&5\%&10\%&(Seconds)\\
 \hline
{FS-LSSVM}&{96.18(0.001)}&{95.91(0.001)}&{95.03(0.001)}&19.9(0.20)\\
{CSI}&{96.15}(0.000)&{96.02(0.001)}&{94.89(0.002)}&29.0(9.88)\\
{PCP-LSSVM}&{96.31(0.000)}&{96.13(0.000)}&{94.92(0.001)}&\textbf{6.1}(0.15)\\
{SR-LSSVM}&{\textbf{96.32}(0.000)}&{\textbf{96.29}(0.000)}&{\textbf{95.10}(0.001)}&{6.5(0.11)}\\
\hline
\end{tabular}
\end{threeparttable}
\label{tab:diffnoise}
\end{table}

\subsection{Regression experiments}
In regression experiments, we conduct the experiments on four medium scale and four large scale benchmark regression data sets. The data sets Mg, Abalone, and Cadata are downloaded from LIBSVM\cite{lib}, data sets Winequality (WQ), Tic, Relation N D, Slice and 3D-spatial are downloaded from UCI database\cite{Lichman2013}.
Each attribute of the samples is normalized into $[-1,~1]$. For every data set, we randomly select 2/3 of the samples as training set and the rest of the samples as testing set. To test the insensitiveness of our proposed algorithm to outliers, we randomly select 1/10 training samples, and add Gaussian noise $\nu_i \sim N(0,~d^2)$ on their targets to simulate outliers. We set $d$ as half of the mean of the targets for each data set. We set the smoothing parameter $p=10^{4}$ in SR-LSSVM and the stop criterion $\varepsilon=10^{-2}$ , and in $\epsilon$-SVR, $\epsilon=0.01$, $C=1/(m\lambda)$.

\subsubsection{Medium-scale benchmark regression datasets experiments}
\begin{table}[!t]\scriptsize
\centering\renewcommand\arraystretch{0.85}
\caption{Comparison of the RMSE, training time (seconds) and mean number of support vectors (denoted by nSVs) of different algorithms on benchmark regression data sets with outliers. The standard deviations are given in brackets.}
\begin{threeparttable}
\setlength{\tabcolsep}{1.7pt}
\begin{tabular}{llrrrrrrrr}
\hline
Data&Algorithms&$m\lambda$&$\sigma$&$\tau$&$h$&Iterations&Training&nSVs&RMSE\\
(train,&&&&&&&Time(s)\\test)\\
\hline
Mg&$\epsilon$-SVR&$10^{0}$&$2^{-3}$&-&-&-&0.030($0.006$)&863.3($46.8$)&0.129($0.005$)\\
(923,&LSSVM&$10^{-2}$&$2^{1}$&-&-&-&0.108($0.006$)&923($0$)&0.132($0.006$)\\
462)&W-LSSVM&$10^{-2}$&$2^{1}$&-&-&-&0.125($0.003$)&897.8($8.4$)&0.132($0.005$)\\
$l=6$&FS-LSSVM&$10^{-2}$&$2^{1}$&-&-&-&0.041($0.004$)&\textbf{46}($0.9$)&0.126($0.005$)\\
&R-LSSVM-W&$10^{-2}$&$2^{1}$&0.9&-&2(0)&0.109($0.003$)&923($0$)&0.132($0.006$)\\
&R-LSSVM-Y&$10^{-2}$&$2^{1}$&0.9&0.025&2(0)&0.112($0.004$)&923($0$)&0.132($0.006$)\\
&SR-LSSVM&$10^{-2}$&$2^{1}$&0.9&-&2(0)&\textbf{0.023}($0.004$)&\textbf{46}($0$)&\textbf{0.125}($0.004$)\\
\hline
Abalone&$\epsilon$-SVR&$10^{-2}$&$2^{-5}$&-&-&-&0.957($0.095$)&2772.2($2.2$)&2.211($0.071$)\\
(2784,&LSSVM&$10^{-4}$&$2^{-4}$&-&-&-&1.215($0.094$)&2784($0$)&3.271($0.199$)\\
1393)&W-LSSVM&$10^{-4}$&$2^{-4}$&-&-&-&1.468($0.049$)&2784($0$)&2.250($0.217$)\\
$l=8$&FS-LSSVM&$10^{-4}$&$2^{-4}$&-&-&-&0.225($0.004$)&\textbf{48.2}($2.2$)&2.188($0.072$)\\
&R-LSSVM-W&$10^{-4}$&$2^{-4}$&0.01&0.15&20(0)&1.537($0.039$)&2784($0$)&2.251($0.062$)\\
&R-LSSVM-Y&$10^{-4}$&$2^{-4}$&0.01&-&20(0)&1.494($0.039$)&2784($0$)&2.251($0.119$)\\
&SR-LSSVM&$10^{-4}$&$2^{-4}$&0.01&-&20(0)&\textbf{0.060}($0.004$)&48.5($1.7$)&\textbf{2.183}($0.091$)\\
\hline
WQ&$\epsilon$-SVR&$10^{0}$&$2^{-4}$&-&-&-&0.074($0.004$)&911.9($12.4$)&0.652($0.017$)\\
(1066,&LSSVM&$10^{-1}$&$2^{-6}$&-&-&-&0.054($0.003$)&1066($0$)&0.724($0.034$)\\
533)&W-LSSVM&$10^{-1}$&$2^{-6}$&-&-&-&0.080($0.006$)&977.6($5.4$)&0.646($0.022$)\\
$l=11$&FS-LSSVM&$10^{-1}$&$2^{-6}$&-&-&-&0.171($0.006$)&\textbf{42.4}($1.2$)&0.748($0.033$)\\
&R-LSSVM-W&$10^{-1}$&$2^{-6}$&1&-&14.6(3.3)&0.095($0.012$)&1066($0$)&0.647($0.030$)\\
&R-LSSVM-Y&$10^{-1}$&$2^{-6}$&1&0.175&12(1.7)&0.091($0.008$)&944.8($24.4$)&0.646($0.028$)\\
&SR-LSSVM&$10^{-1}$&$2^{-6}$&1&-&12.9(1.9)&\textbf{0.013}($0.001$)&\textbf{42.4}($1.2$)&\textbf{0.646}($0.023$)\\
\hline
Tic&$\epsilon$-SVR&$10^{0}$&$2^{-4}$&-&-&-&2.654($0.083$)&1536.8($38.4$)&0.235($0.005$)\\
(6548,&LSSVM&$10^{0}$&$2^{-6}$&-&-&-&4.294($0.068$)&6540.7($2.5$)&0.229($0.005$)\\
 3274)&W-LSSVM&$10^{0}$&$2^{-6}$&-&-&-&7.196($0.102$)&534.4($24.6$)&0.243($0.006$)\\
$l=85$&FS-LSSVM&$10^{0}$&$2^{-6}$&-&-&-&4.391($0.089$)&\textbf{395.5}($2.1$)&0.235($0.005$)\\
&R-LSSVM-W&$10^{0}$&$2^{-6}$&1&-&2(0)&4.446($0.096$)&6540.7($2.5$)&0.229($0.005$)\\
&R-LSSVM-Y&$10^{0}$&$2^{-6}$&1&0.025&2(0)&4.416($0.043$)&6540.7($2.5$)&0.229($0.005$)\\
&SR-LSSVM&$10^{0}$&$2^{-6}$&1&-&2(0)&\textbf{1.751}($0.022$)&399.9($0.3$)&\textbf{0.229}($0.002$)\\
\hline
\end{tabular}
\begin{tablenotes}
\footnotesize
  \item[*] WQ is the abbreviation of Winequality data set.
\end{tablenotes}
\end{threeparttable}
\label{tab:regression_small_result}
\end{table}

Table \ref{tab:regression_small_result} reports the data information, the optimal parameters and experimental results for medium scale data sets. We set $r=0.05m$ for algorithms SR-LSSVM and FS-LSSVM for every data set except Tic ($r=400$). All the algorithms independently operate 10 times to get the unbiased results.

From Table \ref{tab:regression_small_result}, in terms of prediction errors, it is clear that the proposed approach SR-LSSVM works better than any other approaches for all data sets. Moreover, we can also observe that SR-LSSVM is the fastest method and the standard deviations of running time are the smallest in most cases. The sizes of support vectors of SR-LSSVM and FS-LSSVM are much less than other methods and their standard deviations are quite small, which mean that these two methods are sparseness and the sizes of support vectors are stable. However, FS-LSSVM has worse predicted performance (RMSE) than our method and is more time-consuming than ours. Overall, the predicted errors of our SR-LSSVM algorithm are smaller than other algorithms, and our method greatly saves the training time because of its sparseness. So it is a good choice for regression problems.
\subsubsection{Large-scale benchmark regression datasets experiments}
Table \ref{tab:regression_big_result} reports the data information, the optimal parameters and experimental results for large-scale regression data sets. All the algorithms operate 5 times independently to get the unbiased results for each dataset.
\begin{table}[!t]\footnotesize
\centering
\caption{Comparison of the RMSE, training time (seconds) and mean number of support vectors (denoted by nSVs) of $\epsilon$-SVR, FS-LSSVM, CSI, PCP-LSSVM and SR-LSSVM on four large scale benchmark regression data sets with outliers (10\%). The standard deviations are given in brackets.}
\setlength{\tabcolsep}{2.0pt}
\begin{tabular}{llrrrrrr}
\hline
Data sets&Algorithms&$m\lambda$&$\sigma$&$\tau$&Training time(s)&nSVs&RMSE\\
(train, test)\\
\hline
Cadata&$\epsilon$-SVR&$10^{0}$&$2^{-1}$&-&7.44($0.12$)&8462.8($58.4$)&0.28($0.007$)\\
(13760,6880)&FS-LSSVM&$10^{-3}$&$2^{-1}$&-&1.61(0.03)&300(0.9)&0.24(0.004)\\
$l=8$&CSI&$10^{-3}$&$2^{-1}$&-&6.03(1.59)&\textbf{159}(36.9)&0.25(0.006)\\
&PCP-LSSVM&$10^{-3}$&$2^{-1}$&-&\textbf{1.02}(0.01)&302(0)&0.24(0.005)\\
&SR-LSSVM&$10^{-3}$&$2^{-1}$&1.5&1.05($0.01$)&302(0)&\textbf{0.24}($0.002$)\\
\hline
Relation N D&$\epsilon$-SVR&$10^{-1}$&$2^{-2}$&-&112.6($2.1$)&24556($82.4$)&1.31($0.076$)\\
(35608,17805)&FS-LSSVM&$10^{-2}$&$2^{-4}$&-&2.66(0.03)&173.8(1.9)&1.55(0.060)\\
$l=24$&CSI&$10^{-2}$&$2^{-4}$&-&9.44(1.35)&\textbf{107.4}(21.9)&1.78(0.086)\\
&PCP-LSSVM&$10^{-2}$&$2^{-4}$&-&\textbf{1.58}(0.01)&178(0)&1.41(0.058)\\
&SR-LSSVM&$10^{-2}$&$2^{-4}$&13.7&1.76($0.09$)&178($0$)&\textbf{1.27}($0.028$)\\
\hline
Slice&$\epsilon$-SVR&$10^{-2}$&$2^{-6}$&-&5088.77($23.2$)&35627.4($3.7$)&17.44($0.10$)\\
(35666,17834)&FS-LSSVM&$10^{0}$&$2^{-11}$&-&65.78(0.23)&594.8(0.84)&23.80(0.771)\\
$l=386$&CSI&$10^{0}$&$2^{-11}$&-&161.39(56.5)&\textbf{52}(22.6)&12.92(0.364)\\
&PCP-LSSVM&$10^{0}$&$2^{-11}$&-&\textbf{42.96}(0.25)&600(0)&12.81(0.369)\\
&SR-LSSVM&$10^{0}$&$2^{-11}$&40&43.08($0.27$)&600($0$)&\textbf{8.84($0.054$)}\\
\hline
3D-spatial&$\epsilon$-SVR&$10^{0}$&$2^{-1}$&-&6576.82($46.0$)&276625($234.8$)&0.49($0.006$)\\
(289916,&FS-LSSVM&$10^{-3}$&$2^{-3}$&-&62.51(0.45)&399.4(0.5)&0.45(0.001)\\
144958)&CSI&$10^{-3}$&$2^{-3}$&-&132.3(13.00)&\textbf{173.3}(41.7)&0.46(0.001)\\
$l=3$&PCP-LSSVM&$10^{-3}$&$2^{-3}$&-&\textbf{36.41}(0.13)&400(0)&0.45(0.001)\\
&SR-LSSVM&$10^{-3}$&$2^{-3}$&1.6&37.09($0.35$)&400($0$)&\textbf{0.45}($0.001$)\\
\hline
\end{tabular}
\label{tab:regression_big_result}
\end{table}

From Table \ref{tab:regression_big_result}, 
it is clear that prediction errors of our approach are smaller than those of the other compared algorithms for all data sets. As to the running time, SR-LSSVM method is faster than CSI, FS-LSSVM and $\epsilon$-SVR on all of data sets, and $\epsilon$-SVR is the slowest. For example, for 3D-spatial data set, $\epsilon$-SVR spends about 6577 seconds for training, whereas our method SR-LSSVM only needs approximately 34 seconds and obtains smaller value of RMSE. PCP-LSSVM is the fastest algorithm, but its prediction error is larger than SR-LSSVM. As regard to the number of support vectors, $\epsilon$-SVR has the most, whereas the support vector size of SR-LSSVM is quite small. Although support vector size of CSI is the smallest, the prediction error of it is larger than that of SR-LSSVM, and CSI is slower than our method.
Overall in Table \ref{tab:regression_big_result}, it observes that our proposed approach is more suitable for training large scale regression data sets, as it is quite fast and the prediction performance is good.
\section{Conclusion}
R-LSSVM model is robust for classification and regression problems, which is interpreted in this paper from a re-weighted viewpoint. However, the main shortcoming of it is that the existing algorithms for R-LSSVM lose sparseness, so they cannot deal with the large-scale problems. To overcome this drawback, we utilize the representer theory and incomplete pivoting Cholesky factorization technique to obtain the sparse solution of R-LSSVM in this article, and propose an effective algorithm SR-LSSVM. Experimental results indicate that our algorithm not only has sparseness and robustness, but also has better or comparable prediction performance than the comparison algorithms. Furthermore, the training speed of the proposed algorithm is faster than R-LSSVM-W, R-LSSVM-Y, LSSVM, W-LSSVM, FS-LSSVM, CSI and SVMs (C-SVC for classification and $\epsilon$-SVR for regression) on most median-scale datasets, and for the very large data sets, our method is faster than all the comparison sparse algorithms. So SR-LSSVM is a good choice to deal with the large-scale classification and regression problems.
\section*{Acknowledgements}
This work was supported by the National Natural Science Foundation of China (NNSFC) [grant number 71301067]; and the Fundamental Research Funds for the Central Universities [grant number JB150718].
\section*{References}
\bibliographystyle{elsarticle-num}
\bibliography{References_RLS-SVM}

\end{document}